\def\eqref#1{equation~\ref{#1}}
\def\1{\bm{1}}
\DeclareMathAlphabet{\mathsfit}{\encodingdefault}{\sfdefault}{m}{sl}
\SetMathAlphabet{\mathsfit}{bold}{\encodingdefault}{\sfdefault}{bx}{n}
\newcommand{\R}{\mathbb{R}}
\DeclareMathOperator*{\argmax}{arg\,max}
\newcommand{\ip}{\mathrm{IP}}
\newcommand{\lp}{\mathrm{LP}}
\newcommand{\by}{\mathbf{y}}
\newcommand{\bx}{\mathbf{x}}
\newcommand{\bz}{\mathbf{z}}
\newcommand{\bb}{\mathbf{b}}
\newcommand{\bl}{\boldsymbol{\lambda}}
\newcommand{\bA}{\mathbf{A}}
\newcommand{\bh}{\mathbf{h}}
\newtheorem{theorem}{Theorem}
\newtheorem{lemma}{Lemma}
\begin{document}

% If your paper is accepted and the title of your paper is very long,
% the style will print as headings an error message. Use the following
% command to supply a shorter title of your paper so that it can be
% used as headings.
%
%\runningtitle{I use this title instead because the last one was very long}

% If your paper is accepted and the number of authors is large, the
% style will print as headings an error message. Use the following
% command to supply a shorter version of the author names so that
% they can be used as headings (for example, use only the surnames)
%
%\runningauthor{Surname 1, Surname 2, Surname 3, ...., Surname n}
\runningtitle{A Dual Perspective on Decision-Focused Learning}

\twocolumn[

\aistatstitle{A Dual Perspective on Decision-Focused Learning:\\ Scalable Training via Dual-Guided Surrogates}

\aistatsauthor{ Paula Rodriguez-Diaz \And Kirk Bansak \And  Elisabeth Paulson }

\aistatsaddress{ Harvard University \And  University of California, Berkeley \And Harvard University } 
]

\begin{abstract}
Many real-world decisions are made under uncertainty by solving optimization problems using predicted quantities. This predict-then-optimize paradigm has motivated \textit{decision-focused learning}, which trains models with awareness of how the optimizer uses predictions, improving the performance of downstream decisions. Despite its promise, scaling is challenging: state-of-the-art methods either differentiate through a solver or rely on task-specific surrogates, both of which require frequent and expensive calls to an optimizer, often a combinatorial one. In this paper, we leverage \textit{dual variables} from the downstream problem to shape learning and introduce \textit{Dual-Guided Loss} (DGL), a simple, scalable objective that preserves decision alignment while reducing solver dependence. We construct DGL specifically for combinatorial selection problems with natural one-of-many constraints, such as matching, knapsack, and shortest path. Our approach (a) decouples optimization from gradient updates by solving the downstream problem only periodically; (b) between refreshes, trains on dual-adjusted targets using simple differentiable surrogate losses; and (c) as refreshes become less frequent, drives training cost toward standard supervised learning while retaining strong decision alignment. We prove that DGL has asymptotically diminishing decision regret, analyze runtime complexity, and show on two problem classes that DGL matches or exceeds state-of-the-art DFL methods while using far fewer solver calls and substantially less training time. Code is available at \url{https://github.com/paularodr/Dual-Guided-Learning}.\looseness=-1
\end{abstract}

\section{Introduction}
In many real-world decision-making settings, decisions are made under uncertainty and then used to inform downstream actions. The Predict-then-Optimize (PtO) framework formalizes this workflow, with predictions fed into a downstream optimization solver that returns decisions \citep{elmachtoub_smart_2022}. The prevailing approach is a two-stage pipeline (Fig \ref{fig:training_pipelines}a): (i) train a predictor to minimize a statistical loss (e.g., mean-squared error or negative log-likelihood), then (ii) plug those predictions into an optimization solver to pick actions. In this setup, model training optimizes for predictive accuracy, not decision quality, and the two objectives can diverge. For example, when minimizing prediction accuracy, errors are weighted uniformly rather than by their impact on feasibility and cost; small errors near decision boundaries can flip the chosen action and incur large decision regret. In general, problem-specific constraints and asymmetric costs are ignored under a two-stage approach.\looseness=-1

To address this mismatch, decision-focused learning (DFL) methods train the predictor inside this PtO pipeline to prioritize downstream decision quality over prediction accuracy alone \citep{mandi_decision-focused_2024}. DFL is typically achieved through two routes. One embeds a \emph{differentiable optimization layer} so the model is trained end-to-end through the optimization solver (e.g., OptNet; CVXPYLayers), which requires a forward solve and backpropagation through the optimization problem at each update \citep{wilder_melding_2019, amos_optnet_2017} (Fig \ref{fig:training_pipelines}b). The other replaces direct regret minimization with \emph{surrogate objectives} that approximate decision loss but still invoke one or more solves per batch/iteration (often on loss-augmented or smoothed problems) \citep{elmachtoub_smart_2022,mulamba_contrastive_2021}. These approaches have been shown to make higher-quality decisions on held-out test data in a variety of problem settings compared to the standard PtO \citep{mandi_decision-focused_2024}.
However, in both cases, optimization is invoked repeatedly throughout training, validation, and tuning, which creates substantial computational and memory overhead \citep{mandi_smart_2020, shah_decision-focused_2022, mulamba_contrastive_2021}. These costs are especially acute for combinatorial tasks—assignments, matchings, knapsack, packing, routing—where exact solves are expensive and even relaxed or approximate oracle calls remain costly. Consequently, runtime and scalability are central obstacles for deploying DFL on realistic data and problem sizes \citep{wang_scalable_2023}.\looseness=-1

In this work we address the tension between DFL and efficiency. Our key idea is that information from the dual formulation of the downstream optimization problem, obtained from a single solve, can guide many subsequent gradient steps without re-invoking the solver. Concretely, we refresh dual variables (Lagrange multipliers) on predictions for the current model only periodically, and between refreshes we train on dual-adjusted targets using a simple surrogate loss. This approach decouples the frequency of optimization from the update loop, reducing overhead while preserving decision awareness (Fig \ref{fig:training_pipelines}c). We show that dual-guided learning dramatically improves scalability and runtime efficiency, especially in combinatorial settings, while maintaining competitive task performance. By addressing the core bottleneck of repeated solves, our framework takes a step toward making decision-focused methods practical for larger-scale and time-sensitive applications. In summary, we make the following contributions:

\begin{itemize}[leftmargin=0.5cm]
    \item \textbf{Dual-Guided Loss (DGL)}. We introduce a new decision-focused learning framework that leverages dual variables (Lagrange multipliers) to guide training. DGL requires only periodic solver calls, decoupling solve frequency from gradient updates. DGL applies to a specific class of decision problems, including the assignment and knapsack, and shortest path problems.
    \item \textbf{Substantial efficiency gains}. Compared to state-of-the-art DFL methods, we demonstrate that DGL dramatically reduces runtime, making training practical in combinatorial optimization settings where repeated solves are prohibitive.
    \item \textbf{Decision quality}. We show empirically that DGL achieves competitive or superior task performance compared to the highest-performance solver-in-the-loop baselines, highlighting that reduced computation does not come at the expense of decision quality.\looseness=-1 
\end{itemize}

\section{Related Work}\label{sec:literature}

\paragraph{Decision-Focused Learning.} 

DFL trains predictors for downstream decisions by integrating an optimization oracle during training. State-of-the-art methods fall into two families \cite{mandi_decision-focused_2024}. (i) Differentiable optimization layers \cite{wilder_melding_2019, amos_optnet_2017} (e.g., OptNet, CVXPYLayers) embed a solver in the forward pass and use implicit differentiation through KKT systems or factorized solvers; each step solves (or partially solves) the problem, creating high overhead on large or combinatorial tasks. Following \cite{mandi_decision-focused_2024}, we call this Quadratic Programming Task Loss (QPTL). (ii) Surrogate-loss methods avoid differentiating the solver by using decision-aware losses with. SPO+ is a standard example \citep{elmachtoub_smart_2022}. It still requires solving the optimization for each training sample. In both families the solver remains in the loop and becomes the bottleneck at scale.

Subsequent work aims to reducing computational overhead along three lines. First, as a preliminary but limited method, LP relaxations with warm starts lower per-iteration cost while preserving decision alignment on hard discrete tasks \citep{mandi_smart_2020}. These speed-ups still require solving an optimization problem throughout training, and thus the resulting training times can still be prohibitive. Second, solution caching \citep{mulamba_contrastive_2021} wraps SPO+ or QPTL, calling the solver intermittently and reusing cached primal solutions to form gradients, which yields large speedups at the cost of approximation error and memory. Third, locally optimized decision losses (LODLs) \citep{shah_decision-focused_2022} move compute outside the update loop by learning instance-specific surrogate losses from oracle decisions. LODL is broadly applicable, including nonconvex cases, but fidelity depends on the learned loss class and the coverage of the sampling procedure.\looseness=-1

Conceptually, caching speeds up existing objectives by reusing cached solutions and LODL replaces them with a learned surrogate. By contrast, DGL directly injects periodically refreshed dual variables into a fixed loss function, so for problems with informative duals it stays tightly aligned with the true optimization objective and typically yields better decision quality. Practically, caching and LODL favor generality across problem classes, whereas DGL trades breadth for a lightweight, scalable design that aligns closely with the downstream optimization problem for a particular (but common) class of problems.

\begin{figure*}
    \centering
    \includegraphics[trim=0 0 0 0,clip,width=\linewidth]{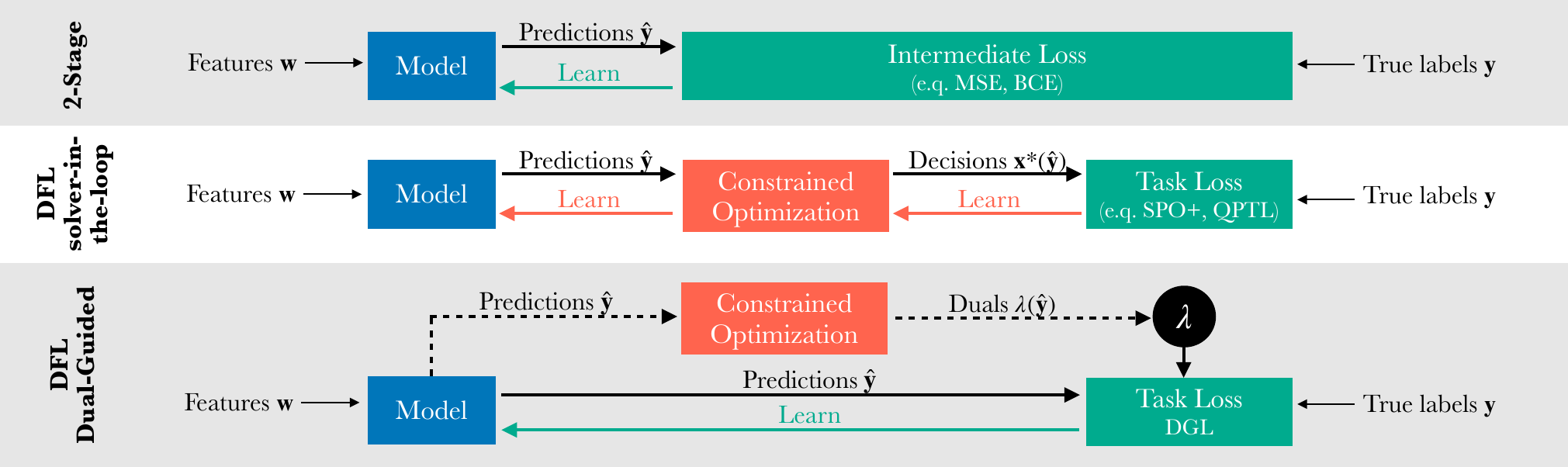}
    \caption{\textbf{Training Pipelines.} (a) \textit{Two-stage}: Training loss ignores decision quality. (b) \textit{DFL with solver-in-the-loop}: train through an optimization oracle—either by differentiating through it (QPTL) or using solution-based gradients (SPO)—incurring a solve at every training step. (c) \textit{DFL with Dual-Guided Loss}: periodically refresh duals $\mathbf{\lambda}$ from the downstream problem and, between refreshes, train on dual-adjusted soft decisions.}
    \label{fig:training_pipelines}
\end{figure*}

\paragraph{Dual Variables in Constrained Training.}
Work outside DFL uses Lagrange dual variables to enforce structure or constraints on predictions. Representative lines include augmented-Lagrangian and primal–dual training for hard/structural constraints \citep{fioretto_lagrangian_2020,nandwani_primal-dual_2019}, structured prediction via posterior regularization and constrained conditional models \citep{ganchev_posterior_nodate,chang_structured_2012}, fairness reductions and rate-constrained methods \citep{agarwal_reductions_2018,cotter_training_2019}, and theory-guided networks that encode physics laws as constraints \citep{rong_lagrangian_2022}. These approaches use duals to enforce validity; in contrast, we use duals from the downstream optimization to shape a decision-aware surrogate that steers predictions toward better end-task decisions.

\paragraph{Dual Variables in Online Optimization.} The idea of using dual variables to guide decision-making is not new. Dual variables are known to act as dynamic \textit{prices} that guide sequential decisions under uncertainty. For example, in online packing, bid-price schemes estimate these prices from samples or LP relaxations, accept a request when the marginal value exceeds the price, and refresh as inventory changes. As another example, in ad allocation, primal-dual algorithms update the prices at each arrival to track evolving traffic and capacity \citep{agrawal_dynamic_2014}. In communication networks, network utility maximization treats congestion signals as dual prices updated from queue feedback, and drift-plus-penalty uses virtual queues as Lagrange multipliers to balance throughput and constraints without a demand model \citep{kelly_rate_1998}. The common pattern is continual estimation or tracking of shadow prices so that decision rules stay aligned with the current operating point. Our dual-guided training adopts this principle inside learning: duals computed from ground truth or early predictions can become stale as the model shifts toward decision-optimal regions, so we periodically recompute multipliers and train on dual-adjusted targets, keeping the training signal consistent with the downstream problem while avoiding solves at every step.

\section{Problem Description}
\subsection{Learning Task}

DFL couples prediction with optimization by training a predictor to minimize a downstream, decision-based loss. We consider a dataset $$\mathcal{D}=\{(\bm{w}^{(k)},\by^{(k)})\}_{k=1}^K$$ consisting of $K$ optimization instances, each with $N$ choices. Each choice $i\in 1,\dots, N$ in instance $k$ is endowed with a feature vector $\bm{w}_i^{(k)}$ and an outcome $y_i^{(k)}$. Thus, 
$\bm{w}^{(k)}=(\bm{w}_i^{(k)})_{i=1}^N \text{ and } \by^{(k)}=(y_i^{(k)})_{i=1}^N.$

A predictive model $M_\theta$ maps features to parameter predictions item-wise, $\hat y_i^{(k)} = M_\theta\!\big(\bm{w}_i^{(k)}\big).$ For brevity, we use $\hat{\by}^{(k)}(\theta)$ to denote the vector of predicted parameters for instance $k$ obtained by applying $M_\theta$ to $\bm{w}^{(k)}$, $\hat{\by}^{(k)}(\theta)=\big(\hat y_i^{(k)}\big)_{i=1}^N$.

For each instance $k$, the decision is obtained by solving
\begin{equation*}
\bx_\theta^{(k)} \in \argmax_{\bx\in\mathcal{F}^{(k)}} f\!\big(\bx,\hat{\by}^{(k)}(\theta)\big),
\end{equation*}
where $\mathcal{F}^{(k)}$ is known and uncertainty enters through $\hat{\by}^{(k)}(\theta)$. Let
$
\bx^*(\by^{(k)}) \in \argmax_{\bx\in\mathcal{F}^{(k)}} f\!\big(\bx,\by^{(k)}\big)
$
denote the optimal decision under the true parameters. The per-instance \emph{decision regret} is
$$
r^{(k)}(\theta)\;=\; f\!\big(\bx^*(\by^{(k)}),\,\by^{(k)}\big)\;-\; f\!\big(\bx_\theta^{(k)},\,\by^{(k)}\big)\;\ge 0,
$$
and aggregating over instances yields the empirical average regret
$$
\widehat{\mathrm{Regret}}(\theta)\;=\;\frac{1}{K}\sum_{k=1}^K r^{(k)}(\theta).
$$
For minimization problems the signs reverse; we adopt the maximization form without loss of generality.

Directly minimizing $\widehat{\mathrm{Regret}}(\theta)$ requires optimizing through the $\argmax$, which is nondifferentiable and expensive to handle. Because small changes in the predictions can flip the discrete decision, the  loss function is flat almost everywhere with sharp points of discontinuity, leaving gradients zero or undefined. To address this, prior DFL work uses differentiable surrogates or reformulates the downstream problem as a differentiable quadratic program for backpropagation, but current strategies require frequent and costly solver calls (see Section \ref{sec:literature}).

\subsection{Downstream Problems Considered}
In this paper, we limit our focus to a particular class of downstream decision problems $f$. Specifically, we consider a family of constrained optimization problems with natural \textit{pick-one groups} where the objective is linear in a parameter vector $\mathbf{y} \in \mathbb{R}^N$, which is typically unknown at decision time. The decision variables $\mathbf{x}$ may be continuous or binary, and the problem takes the general form:
\begin{equation} \label{eq:original_LP}
\begin{aligned}
\max&_{\bx\in\{0,1\}^N}\;\,\by^\top \bx \\[4pt]
\text{s.t.}\;&\bA \bx\le \bb &&\textit{global}\\
            &\sum_{i\in g}x_i = 1 \quad\forall g\in\mathcal{G}
                       &&\textit{one-of-many}
\end{aligned}
\end{equation}

Here, $\bA \in \mathbb{R}^{M \times N}$, $b \in \mathbb{R}^M$, and  $\mathcal{G}$ partitions $\{1,\dots,N\}$ into mutually-exclusive groups. Many common decision problems can be written in this form. Notably, five of the seven benchmark tasks in the DFL survey by \cite{mandi_decision-focused_2024} can be expressed in this form. In the Appendix, we detail how the assignment and knapsack problem, used for our empirical evaluation in Section~\ref{sec:experiments}, can be cast in the form of Problem~\ref{eq:original_LP}. Beyond these two, several common decision problems also admit formulations of Problem~\ref{eq:original_LP}, including shortest path (on a grid or general graph), minimum-cost flow /transportation, and discrete optimal transport with unit-mass marginals. \looseness=-1

\section{Method} \label{sec:method}

In this section we introduce a dual-guided surrogate aligned with decision quality for tasks whose downstream problem can be written as Problem \ref{eq:original_LP}. Throughout this section we focus our attention on \emph{one} optimization instance and drop the superscript $k$. We provide an asymptotic regret bound for our loss function, present three gradient-based training variants that use this loss, and provide a time-complexity analysis relative to state-of-the-art DFL baselines.

\subsection{Dual-Guided Loss}
Let $\bl\in \mathbb{R}_+^M$ be the Lagrange multipliers for the global constraint $\bA\bx\le \bb$ in the continuous relaxation of Problem \ref{eq:original_LP}. The corresponding Lagrangian is
\begin{equation*}
L(\bx,\bl)
    =\by^\top \bx+\bl^\top(\bb-\bA\bx)
    =\bigl(\by-\bA^\top\bl\bigr)^\top \bx+\bl^\top \bb.
\end{equation*}
Because the problem includes the \emph{one-of-many} constraints, a natural candidate solution to the relaxed problem
\begin{equation}
\max_{\bx\in\{0,1\}^N}\;\,L(\bx,\bl) \quad
\text{s.t.}\; \sum_{i\in g}x_i = 1 \quad\forall g\in\mathcal{G}
\end{equation}
is given by $x_{j}=1$ if and only if $j=\arg\max_i y_{i}-(\bA^\top\bl)_{i}$ for $i\in g, \; g\in \mathcal{G}$ (ties broken arbitrarily).\looseness=-1

This motivates defining the \textit{dual-adjusted scores}, 
$h_i(\mathbf{y}, \bl)\triangleq y_i-(\bA^\top\bl)_i.$ 
To make the construction of this candidate solution differentiable, we apply a temperature-controlled softmax within each group $g$:\looseness=-1
\begin{align*}
\tilde \bz_{g,\tau}(\mathbf{y}, \bl)
  &\triangleq \text{softmax}((h_i(\mathbf{y}, \bl))_{i\in g}, \tau)\\
  &= \left(\frac{\exp(h_j(\mathbf{y}, \bl)/\tau)}
        {\sum_{i\in g}\exp(h_i(\mathbf{y}, \bl)/\tau)}\right)_{j\in g},
\end{align*}
which converges to $\argmax$ as $\tau \to 0$.
For comparison, let
$z^*_g(\mathbf{y}, \bl) = \mathbf{e}_{\argmax (h_i(\mathbf{y}, \bl))_{i\in g}}$
denote the discrete $\argmax$ assignment for group $g$, where $\mathbf{e}_i$ denotes the $i$th standard basis vector (the vector with a $1$ in the $i$th position and zeros elsewhere). Finally, we define the \textit{Dual-Guided Loss} (DGL) as 

\begin{equation}
\label{eq:dgl}
    \ell_{\tau}(\theta, \bl)\triangleq -\frac{1}{|\mathcal{G}|} \sum_{g\in\mathcal{G}}\bigl(\mathbf{y}_i\bigr)_{i\in g}^{\top} \tilde \bz_{g,\tau}(\hat{\mathbf{y}}(\theta), \bl).
\end{equation}

This loss approximates the objective of Problem \ref{eq:original_LP} using softmax assignments instead of a hard $\argmax$. Most importantly, the loss is fully differentiable and therefore avoids both repeated solves and backpropagation through the optimization problem.

\subsection{Regret Guarantees} \label{sec:regret}

Let $x^{\ip}(\by)$ and $x^{\lp}(\by)$ denote the solutions to Problem~\ref{eq:original_LP} and its LP relaxation, respectively, and let $\tilde \bz_{g,\tau}(\by,\bl)$ and $\bz_g^*(\by,\bl)$ be as defined above. Our analysis relies on three assumptions:

\begin{enumerate}
[label=\textbf{A\arabic*}.,itemsep=3pt]
\item \textbf{Integral polytope.}  
      The feasible region of the LP relaxation of Problem \ref{eq:original_LP}  is integral,
      hence its optimal solution $x^{\lp}(\by)$ is always binary and
      equals the integer optimum $x^{\ip}(\by)$ for all $\by$.
\item \textbf{Unique group maximizer.}  
 There exists $\gamma>0$ such that for every group $g\in\mathcal G$ and (i) with $c=\by$ and (ii) with $c=\hat \by(\theta_k)$ along the training sequence $\{\theta_k\}$, if $\bl$ is a dual-optimal vector then 
$h_{j_g^\star}(c, \bl)\;-\;\max_{i\in g\setminus\{j_g^\star\}} h_i(c,\bl)
\;\ge\;\gamma$ almost surely, where $j^*_g=\argmax_{i\in g} h_i(c,\bl)$.
\item \textbf{Bounded rewards.}  $\|\by\|_{\infty}\le C<\infty$ almost surely.
\end{enumerate}

Assumption A1 is a strong assumption, limiting the scope of the theoretical analysis to a particular subset of problems. The assignment problem with unit demand per worker and the shortest path problem typically satisfy this assumption. Without A1, an additional error term appears in the regret guarantee of Theorem \ref{thm:regret} that captures the potential gap between the integer optimum and the fractional optimum of the LP relaxation, i.e., the integrality gap. Assumptions A2 and A3 are regularity assumptions that likely hold in most contexts of interest.\looseness=-1

Define expected losses as 
\begin{align*}
&\mathcal L_{\tau}(\theta, \hat{\bl}):=\mathbb E[\ell_{\tau}(\theta, \hat{\bl})].
\end{align*}

where $\hat\bl$ is any dual-optimal vector for $\hat{\mathbf{y}}(\theta)$. Let $\mathcal L_{\tau}^{*}:=\inf_{\theta}\mathcal L_{\tau}(\theta)$ denote the Bayes risk. 

We can write the decision regret as: 
$$ \mathrm{Regret}(\theta) :=\mathbb{E}\left[\mathbf{y}^{\top}x^{\ip}(\mathbf{y}) - \mathbf{y}^{\top}x^{\ip}(\hat \by(\theta))\right].$$

The following result shows that minimizing DGL during training yields predictors with low decision regret; the proof is deferred to the Appendix.
\medskip
\begin{theorem}
\label{thm:regret}
Assume \textup{(A1)--(A3)}. Then for every $\tau>0$ and $\theta$,
$\mathrm{Regret}(\theta)
\;\le\;
|\mathcal G|\,\bigl[\mathcal L_{\tau}(\theta, \hat\bl)-\mathcal L_{\tau}^{*}\bigr]
\;+\;O\!\bigl(e^{-\gamma/\tau}\bigr),$
where the $O\!\bigl(e^{-\gamma/\tau}\bigr)$ term is uniform in $\theta$ (its constant may depend on $C$ and $|\mathcal G|$).
Consequently, if $\mathcal L_{\tau_k}(\theta_k, \hat{\bl}_k)-\mathcal L_{\tau_k}^{*}\to0$ and $\tau_k\downarrow0$, then $\mathrm{Regret}(\theta_k)\to0$.
\end{theorem}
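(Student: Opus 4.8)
The plan is to sandwich the decision regret between a scaled excess DGL risk and a term that decays like $e^{-\gamma/\tau}$, using two ingredients: a structural identity that rewrites the optimal decision as a group-wise argmax of dual-adjusted scores, and a softmax-concentration estimate controlled by the margin in A2.

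The crux is a lemma: under A1 and A2, for any reward vector $c$ with dual-optimal multipliers $\bl$, the integer optimum decomposes group-wise, $x^{\ip}(c)=\bigl(\bz_g^*(c,\bl)\bigr)_{g\in\gG}$, i.e.\ within each group it selects $\argmax_{i\in g} h_i(c,\bl)$. I would prove this through the LP saddle-point characterization: relaxing only $\bA\bx\le\bb$ leaves the one-of-many constraints, over which $L(\bx,\bl)$ is maximized coordinate-wise within each group by the argmax; strong LP duality guarantees that a primal LP optimum is such a Lagrangian maximizer and is complementary to dual-optimal $\bl$; A2 makes this maximizer unique (no ties within groups), so it equals the group-wise argmax; and A1 makes it integral and equal to $x^{\ip}(c)$. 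I expect this to be the main obstacle, since it is the only place the full problem structure (pick-one groups, integrality, and duality jointly) is used, and it must be invoked both at $c=\by$ and at $c=\hat\by(\theta)$ — precisely the two regimes covered by A2. A useful consequence is that, because $x^{\ip}(\hat\by(\theta))$ is always feasible for the true problem, $\sum_{g}(\mathbf{y}_i)_{i\in g}^\top \bz_g^*(\hat\by(\theta),\hat\bl)=\by^\top x^{\ip}(\hat\by(\theta))\le \by^\top x^{\ip}(\by)$.

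Next I would quantify the softmax error. Whenever the margin condition holds at scores $h(c,\bl)$, the softmax mass off the maximizer is at most $(|g|-1)e^{-\gamma/\tau}$, so $\|\tilde\bz_{g,\tau}(c,\bl)-\bz_g^*(c,\bl)\|_1\le 2(|g|-1)e^{-\gamma/\tau}$; with A3 this yields the two-sided bound $\bigl|(\mathbf{y}_i)_{i\in g}^\top(\tilde\bz_{g,\tau}-\bz_g^*)\bigr|\le 2C(|g|-1)e^{-\gamma/\tau}$. Summing over groups and taking expectations, the lemma converts the hard-assignment reward into $\by^\top x^{\ip}(\cdot)$, giving (i) for any $\theta$, $\mathcal L_\tau(\theta,\hat\bl)\ge -\tfrac1{|\gG|}\E[\by^\top x^{\ip}(\hat\by(\theta))]-O(e^{-\gamma/\tau})$, using A2(ii) at $c=\hat\by(\theta)$; and (ii) evaluating the oracle predictor $\hat\by=\by$ (which upper-bounds the infimum $\mathcal L_\tau^*$) with dual-optimal $\bl^\star$, $\mathcal L_\tau^*\le -\tfrac1{|\gG|}\E[\by^\top x^{\ip}(\by)]+O(e^{-\gamma/\tau})$, using A2(i) at $c=\by$.

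Subtracting (ii) from (i) and multiplying by $|\gG|$ telescopes the two optimal-value terms into the regret:
\[
|\gG|\bigl[\mathcal L_\tau(\theta,\hat\bl)-\mathcal L_\tau^*\bigr]\;\ge\;\E[\by^\top x^{\ip}(\by)]-\E[\by^\top x^{\ip}(\hat\by(\theta))]-O(e^{-\gamma/\tau})\;=\;\mathrm{Regret}(\theta)-O(e^{-\gamma/\tau}),
\]
which rearranges to the claimed inequality (the $e^{-\gamma/\tau}$ constant absorbing the factors $C$ and $|\gG|$ and being uniform in $\theta$). For the asymptotic consequence I would apply the bound along the sequence $(\theta_k,\tau_k)$: since $\gamma>0$ is fixed, $\tau_k\downarrow0$ forces $e^{-\gamma/\tau_k}\to0$, and the hypothesis $\mathcal L_{\tau_k}(\theta_k,\hat\bl_k)-\mathcal L_{\tau_k}^*\to0$ kills the leading term, so $\mathrm{Regret}(\theta_k)\to0$.
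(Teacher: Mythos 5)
Your proposal is correct and follows essentially the same route as the paper's proof: the same two key ingredients (a groupwise-argmax characterization of $x^{\ip}(c)$ via LP duality, complementary slackness, A2-uniqueness, and A1-integrality — the paper's Lemma~2 — plus the softmax--argmax $\ell_1$ concentration bound — the paper's Lemma~1), and your two inequalities (i)--(ii), subtracted and rescaled by $|\mathcal G|$, are just a reorganization of the paper's add-and-subtract decomposition of the regret into the excess risk plus two $O(e^{-\gamma/\tau})$ gap terms. Note that both you and the paper implicitly assume the oracle predictor $\hat\by=\by$ is admissible when bounding $\mathcal L_\tau^*$, so your proposal introduces no gap beyond what the paper itself accepts.
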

We note that since $e^{-\gamma/\tau}=o(\tau)$, the remainder term is $O(\tau)$ for sufficiently small $\tau$.

\subsection{Learning with DGL loss}

Our method trains $M_\theta$ by minimizing the DGL loss with gradient descent. For each feature vector $\bm{w}_i$, the model predicts $\hat y_i = M_\theta(\bm{w}_i)$, and we form soft surrogate decisions $\mathbf{\tilde{z}}_\tau(\hat{\mathbf{y}}, \hat\bl)$, enabling full backpropagation through the differentiable surrogate loss function $\ell_{\tau}(\theta, \hat\bl)$. 

\subsubsection{Dual Refresh Strategies} \label{sec:dual_refresh}
A key challenge is setting the dual multipliers. Throughout training, $\hat{\mathbf{y}}(\theta)$ is updated by $\theta$. Theorem \ref{thm:regret} shows that if DGL loss is minimized---where $\hat{\bl}$ are dual-optimal for $\hat{\mathbf{y}}(\theta)$---then decision regret vanishes. This suggests that updating the dual multipliers throughout training can be beneficial. In practice, though, it is computationally costly (and, we find, also unnecessary) to update the dual values at every training epoch, as estimating optimal the dual multipliers requires solving the original optimization problem.
Thus, we explore three strategies that periodically refresh duals during training:\looseness=-1

\begin{itemize}[leftmargin=0cm]
    \item[] \textbf{No-update DGL:} Dual variables are computed once at initialization by solving the original optimization problem (Problem \ref{eq:original_LP}) using the ground-truth cost vectors $\mathbf{y}$. These duals are fixed throughout training.

    \item[] \textbf{Fixed-frequency DGL:} Duals are updated for all training instances every $U$ epochs using the current model predictions $\hat{\mathbf{y}}(\theta)$. Each dual update requires solving the optimization problem for every training instance with the predicted cost vector.

    \item[] \textbf{Auto-update:} Duals are updated only for instances where surrogate decision (the argmax of $\hat{\mathbf{y}} - \mathbf{A}^\top \hat\bl$) leads to constraint violation, targeting updating cases where the current duals are misleading.\looseness=-1 
\end{itemize}

\begin{algorithm}[t]
\small
\caption{DGL with Fixed-Frequency Updates}
\label{alg:dgl-fixed}
\begin{algorithmic}[1]
\REQUIRE Dataset $\{(\mathbf{w}^{(k)}, \mathbf{y}^{(k)})\}_{k=1}^K$, model $M_\theta$, temperature $\tau$, learning rate $\eta$, number of epochs $T$, dual update frequency $U$
\STATE Initialize model parameters $\theta$
\FOR{$t = 0$ to $T-1$}
    \IF{$t \bmod U = 0$}
        \FOR{each instance $(\mathbf{w}^{(k)}, \mathbf{y}^{(k)})$ in dataset}
            \STATE Compute predictions $\hat{\mathbf{y}}^{(k)} \leftarrow (M_\theta(\mathbf{w}_{i}^{(k)}))_{i=1}^{N}$
            \STATE Solve Problem~\ref{eq:original_LP} with current predictions to obtain updated duals
            $\bl_k\leftarrow \bl^*(\hat{\mathbf{y}}^{(k)})$
            % \COMMENT{Solve continuous relaxation of Problem~\ref{eq:original_LP} with costs $\hat{\mathbf{y}}$ to obtain updated duals}
        \ENDFOR
    \ENDIF
    \FOR{each mini-batch $\mathcal{B} \subset \{1, \dots, K\}$}
        \STATE Compute predictions 
        $\hat{\mathbf{y}}^{(k)} \leftarrow (M_\theta(\mathbf{w}_{i}^{(k)}))_{i\in \mathcal{B}}$
        \STATE Compute surrogate decisions $\tilde{\mathbf{z}}_k = \text{softmax}(\tau \cdot (\hat{\mathbf{y}}^{(k)} - \bA^\top \bl_k))$ for each $k\in \mathcal{B}$
        \STATE Compute loss $\ell_\tau(\theta, \bl) = \frac{1}{|\mathcal{B}|} \sum_{k \in \mathcal{B}} -\mathbf{y}^{(k)\top} \tilde{\mathbf{z}}_k$
        \STATE Update model: $\theta \leftarrow \theta - \eta \cdot \nabla_\theta \ell_\tau(\theta,\bl)$
    \ENDFOR
\ENDFOR
\end{algorithmic}
\end{algorithm}

All DGL variants offer substantial computational savings over standard decision-focused learning, as they avoid solving or differentiating through the optimization layer on every gradient step. Algorithm~\ref{alg:dgl-fixed} presents the training loop under gradient descent with fixed-frequency dual refresh. For completeness, pseudocode for the no-update and auto-update dual refresh strategies is provided in Appendix.

\subsubsection{Dual-Adjusted Loss for Stable Training}
$\ell_{\tau}(\theta,\bl)$ is the natural loss function and leads to a clean regret guarantee (see Section \ref{sec:regret}). However, in practice we find that this loss function can be difficult to minimize, particularly in problem settings with tight capacity constraints. Intuitively, during training the decision distribution $\tilde{\bz}$ (defined using the softmax of $\hat{\by}-\bA^{\top}\hat{\bl}$ for group $g$) tries to maximize reward by putting more weight on high-reward items. This creates a misalignment that artificially inflates the predictions for these items, which is only partially mitigated by re-estimating the duals. Computationally, we find that training is more stable under the dual-adjusted loss\looseness=-1
\begin{equation}\small
\label{eq:dgl-dual-adjusted}
    \tilde{\ell}_{\tau}(\theta, \hat\bl)\triangleq -\frac{1}{|\mathcal G|}\sum_{g\in\mathcal G}
\bigl(\mathbf{y}_i-(\bA^{\top}\hat{\bl})_i \bigr)_{i\in g}^{\!\top}\,\tilde \bz_{g,\tau}\bigl(\hat \by(\theta),\hat\bl\bigr).
\end{equation}
This loss function evaluates predictions on the same reduced-cost scale used by softmax. This dual-adjusted variant can be interpreted as a practical stabilization device, similar to reward shaping in reinforcement learning and margin rescaling in supervised machine learning.

To further stabilize training and allow flexibility to control the tradeoff between prediction accuracy and decision loss, we also add a mean-squared error term. Thus, the loss function we use in practice is
$$\tilde \ell^{\text{MSE}}_{\tau}(\theta,\hat\bl)\triangleq \tilde \ell_{\tau}(\theta,\hat\bl)+ \alpha \frac{1}{N}\sum_{i=1}^N (y_i-\hat{y}_i)^2,$$
where $\alpha$ is a tunable hyperparameter. In the Appendix, we show an analogous result to Theorem \ref{thm:regret} for this loss function. Namely, we show that minimizing $\tilde \ell^{\text{MSE}}_{\tau}(\theta,\hat\bl)$ with $\alpha>0$ results in vanishing decision regret.\looseness=-1

\section{Time Complexity Analysis}

We summarize the state-of-the-art DFL baselines and their per-epoch complexity under gradient-based training. With $K$ gradient steps per epoch, per-instance times are: $T_M$ for a forward--backward pass of $M_\theta$, $T_O$ to solve the downstream optimization, and $T'_O$ to differentiate through the optimization layer. When an offline surrogate is used, $S$ is the sample count and $T_{\text{LODL}}$ its training time.

\begin{itemize}[leftmargin=0cm]
    \item[] \textbf{Two-Stage} optimizes the predictive model using a supervised loss (e.g., MSE) without solving or differentiating through the downstream optimization problem. Its per-epoch time complexity is $\Theta(K \cdot T_M)$.

    \item[] \textbf{SPO+} \citep{elmachtoub_smart_2022} builds a piecewise-linear surrogate from the problem’s solution, requiring one solve per instance but no differentiation through the optimization layer. A subgradient is computed via a closed-form expression using optimal and perturbed solutions. The per-epoch time complexity is $\Theta(K \cdot (T_M + T_O))$, with $T_O$ the time to solve an LP. With solution caching \citep{mulamba_contrastive_2021}, which reuses a stored solution when it remains optimal for the perturbed cost vector, the per-epoch time is $\Theta(K \cdot T_M + (1-\rho)\, K \cdot T_O)$, with cache hit rate $\rho \in [0,1]$.\looseness=-1

    \item[] \textbf{QPTL} uses a differentiable convex optimization layer, typically implemented with tools like CVXPYLayer \citep{wilder_melding_2019,amos_optnet_2017}. It augments the original linear objective with a quadratic regularization term, resulting in a QP. For each batch, it requires solving this QP during the forward pass and computing gradients through the solver using implicit differentiation. The per-epoch time complexity is $\Theta(K \cdot (T_M + T_O + T'_O))$, with $T_O$ the time to solve a QP. In practice, $T'_O$ is often the dominant term due to solver overhead and autograd computation, especially when the problem has many constraints or decision variables.

    \item[] \textbf{LODLs.} learn an offline regret surrogate from $K$ samples and solver outputs, so $M_\theta$ can train without solving the downstream problem \citep{shah_decision-focused_2022}. Per-epoch cost is $\Theta(K T_M)$ (as in two-stage); the one-time offline cost is $\Theta(S K T_O + S K T_{\text{LODL}})$, amortizing solver calls but adding upfront cost and approximation error.\looseness=-1

    \item[] \textbf{DGL (Ours)} uses fixed or periodically updated dual variables to guide training via a surrogate loss that avoids both solving and differentiating through the optimization problem during most epochs. The gradient is computed through a soft surrogate (e.g., softmax-based) that approximates the argmax over decisions, and does not require solving the optimization problem or computing gradients through a solver. If duals are updated every $U$ epochs (where $U \geq 1$), the method only solves the optimization problem once every $U$ epochs per instance. The per-epoch time complexity is therefore $\Theta(K \cdot T_M + \frac{1}{U} \cdot K \cdot T_O)$. When $U = \infty$ (i.e., duals are computed once at initialization and never updated), DGL approaches the runtime of the two-stage method while still benefiting from decision-aware guidance. As $U$ decreases, performance may improve at the cost of additional solver calls.

\end{itemize}

\paragraph{Scaling to Large Problem Instances.} While the expressions above describe per-epoch runtime, solver costs $T_O$ and $T'_O$ grow with the size of the optimization problem (e.g., the number of individuals and locations in matching). This growth disproportionately hurts QPTL, since both the forward solve and the differentiation step scale poorly. In contrast, DGL avoids solving during most epochs and is therefore significantly more scalable in practice. For large instances, this results in markedly faster training times with minimal performance degradation. We note that solution caching \citep{mulamba_contrastive_2021} can decrease the run-time of both QPTL and SPO+, though this comes at a cost in performance.

To summarize, when solver time $T_O$ dominates model time $T_M$ (typical for combinatorial tasks), DGL with refresh period $U$ reduces the amortized solver cost by $\approx 1/U$ relative to SPO+, and removes the differentiation term $T'_O$ required by QPTL. Concretely, with $K$ gradient steps per epoch,
$$
\textbf{DGL}=\Theta\!\big(K(T_M+\tfrac{1}{U}T_O)\big)$$
$$\textbf{SPO+}=\Theta\!\big(K(T_M+T_O)\big)$$
$$\textbf{QPTL}=\Theta\!\big(K(T_M+T_O+T'_O)\big).
$$
Thus, for any $U>1$, DGL is faster per epoch than SPO+, with speedup $\approx U$ when $T_O\gg T_M$. Compared to QPTL, DGL avoids $T'_O$ and cuts solver calls by $U$. Because LP/QP solves scale super-linearly in problem size, the gap widens with instance size, matching the runtime trends in our empirical evaluations (Fig.~\ref{fig:experiments}).

\section{Experiments}\label{sec:experiments}

We evaluate DGL on two DFL tasks: a synthetic matching problem motivated by algorithmic refugee assignment \citep{bansak_outcome-driven_2024} that is new to the DFL literature, and a knapsack problem with weighted items from the DFL literature \citep{mandi_decision-focused_2024}. We compare against strong baselines (Two-Stage, SPO+, and QPTL), using model architectures from prior work and identical optimization solvers across methods. We provide further details on the training procedure and hyperparameter tuning in the Appendix.\looseness=-1

We exclude LODLs to keep evaluation focused on solver in-the-loop training. A fair comparison to LODL would require training an offline surrogate on optimization-labeled data with tuning/validation, adding substantial engineering and upfront runtime that confounds per-epoch timing and is orthogonal to our contribution. We also do not evaluate cached variants of SPO+ or QPTL: while solution caching can reduce wall-clock time, prior work indicates it typically trades off decision quality \citep{mandi_decision-focused_2024}. Our takeaway is that DGL matches or exceeds baseline decision quality while delivering substantially faster runtimes without relying on such trade-offs.\looseness=-1

\subsection{Experimental Settings}

\begin{figure*}[t]
    \centering
    \includegraphics[width=\linewidth,trim={0pt 0pt 0 0pt}, clip]{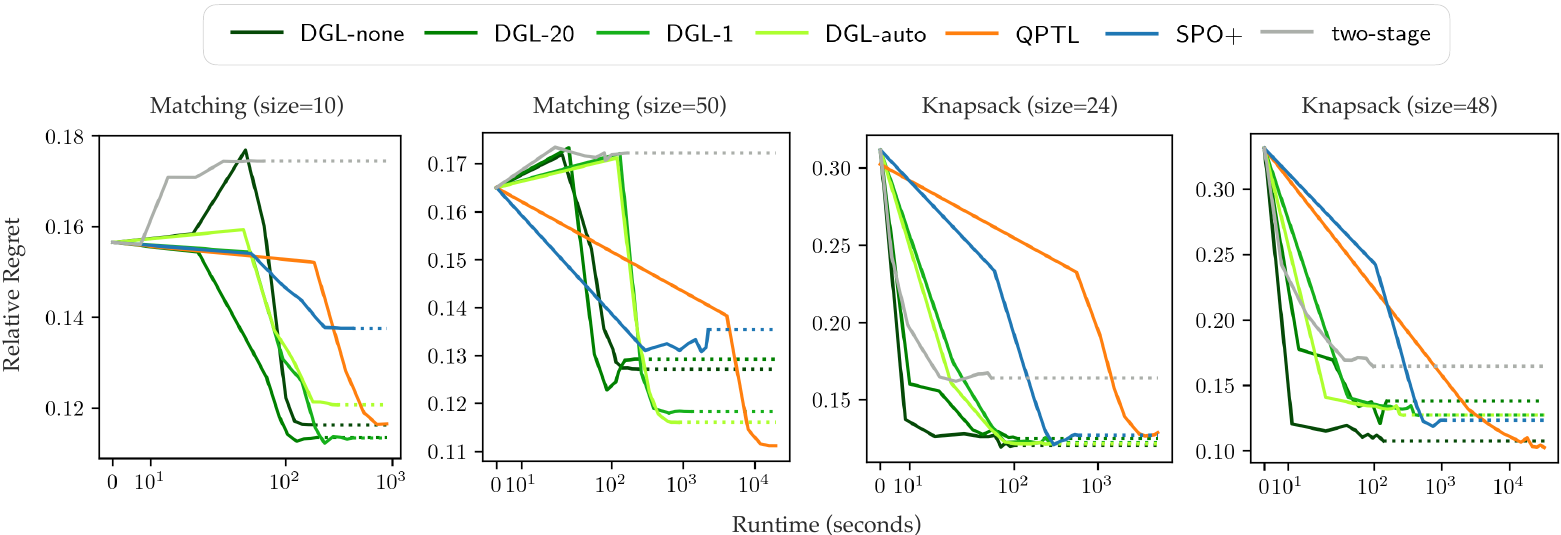}
    \caption{\textbf{Training time vs test relative regret.} Across Matching (sizes 10, 50) and Knapsack (sizes 24, 48), DGL variants reach competitive or better regret far faster than QPTL and SPO+.}
    \label{fig:experiments}
\end{figure*}

\paragraph{Many-to-One Matching.}
We generate synthetic instances inspired by algorithmic refugee assignment \citep{bansak_outcome-driven_2024}. For each individual $i$, we draw $x_i\in\mathbb{R}^{p_{\mathrm{base}}+p_{\mathrm{loc}}}$ from a zero-mean Gaussian with feature-wise standard deviations sampled i.i.d. from $(0,5]$. We take $p_{\mathrm{base}}=10$ and $p_{\mathrm{loc}}=5$ for a total of 15 features. For each location $j$, we form pairwise inputs by repeating $x_i$ across locations and appending one-hot indicators that activate the $p_{\mathrm{loc}}$ location-specific features. Utilities are given by $u_{ij}=\sigma\!\big(y^{\mathrm{base}}_i+\alpha\,y^{\mathrm{loc}}_{ij}+\varepsilon_{ij}\big)$, where $y^{\mathrm{base}}_i$ is a zero-mean, unit-variance \emph{simple} linear combination of base features using integer coefficients in $\{1,\dots,4\}$ (computed once per individual and repeated across locations), and $y^{\mathrm{loc}}_{ij}$ is a \emph{complex} location-differentiated term: for each location $j$, we use integer coefficients in $\{-5,\dots,5\}$ applied to the absolute values of the location features together with a location-specific bilinear interaction $x^{\mathrm{loc}}_{i,0}x^{\mathrm{loc}}_{i,1}$. The resulting matrix is then standardized using dataset-wide mean and standard deviation across all $(i,j)$. Here $\alpha$ weights the location effect, $\varepsilon_{ij}\sim\mathcal{N}(0,\sigma^2)$ is optional Gaussian noise, and $\sigma(\cdot)$ is the sigmoid, yielding $u_{ij}\in(0,1)$. We evaluate two sizes—matching 10 or 50 individuals to 3 locations—with 400/400/200 train/val/test instances per size, using logistic regression implemented as a neural network model with no hidden layers as the predictor.

\paragraph{Weighted Knapsack.} We evaluate on the weighted knapsack benchmark from \cite{mandi_smart_2020}, which uses data from the Irish Single Electricity Market Operator (SEMO) \citep{ifrim_properties_2012}. The tasks consists in selecting half-hour slots that maximize revenue subject to budget constraints. Weights $w_i$ for each time-slot and the total capacity are known, while per-item values $c_i$ must be predicted from features (8 features per slot). Item weights are sampled from $\{ 3,5,7 \}$ to induce correlation between weights and values, each slot’s value is generated by multiplying the energy-price feature vector with the weight and adding Gaussian noise $\varepsilon \sim N(0,25)$. In the released data the total weight sums to 240 and capacities of 60, 120, and 180 are considered. We consider an additional setting with 24 items by taking only the first half-hour of each hour, while retaining the same weight sampling and value-generation scheme. The predictive model is the same used by \cite{mandi_smart_2020} for this task; a simple linear model implemented as a neural network model with no hidden layers.\looseness=-1

\subsection{Results}

Across all four settings (Matching 10-50 and Knapsack 24-48) DGL variants reach low test regret far sooner than solver-in-the-loop baselines (Fig. 2). The DGL curves drop steeply at the start of training, whereas SPO+ and QPTL require much longer time to attain comparable regret. On the larger instances (Matching-50, Knapsack-48), QPTL eventually reaches the lowest regret but only by a modest margin and at a steep computational cost: attaining that final edge demands on the order of two additional orders of magnitude wall-clock time relative to DGL’s early plateau. Consequently, for any reasonable time budget, DGL delivers clearly lower regret than the in-loop methods.\looseness=-1

As instances grow, the feasible set expands combinatorially and per-step solves in SPO+ and QPTL dominate runtime. DGL amortizes solver work and decouples solves from gradient steps, preserving near–two-stage speed even on larger problems. In regimes where an extra $10$--$100\times$ of compute is costly, DGL attains competitive—often better at equal time—decision quality far sooner. When the absolute best final regret is required, one can warm-start QPTL with a short DGL phase to capture the remaining small gains.\looseness=-1

Among DGL variants, light-touch refresh policies
keep the solver mostly out of the loop, so runtime stays near two-stage rather than creeping toward in-loop costs. In our plots, DGL-20 often tracks DGL-none for much of training (and sometimes lies between DGL-1 and DGL-none), while DGL-auto remains competitive by updating only on prospective constraint violations. Even DGL-none (a single initial solve) achieves strong regret quickly—approaching the runtime of two-stage training while remaining decision-aware.\looseness=-1

\section{Discussion}
Our results demonstrate that DGL achieves both high quality decisions and computational efficiency. By using periodically refreshed dual variables as decision-aware signals, DGL eliminates the need for repeated solver calls during most training steps, scaling to problem sizes where existing solver-in-the-loop methods (SPO+, QPTL) become prohibitively expensive. On two benchmark tasks, DGL matches or exceeds state-of-the-art decision quality while substantially reducing runtime, making it a practical choice for real-world PtO pipelines where scalability is paramount.\looseness=-1

Methodologically, our framework highlights the value of dual variables as training signals. In contrast to caching or learned surrogates, which trade off performance for speed, DGL directly uses periodically refreshed dual multipliers in a fixed differentiable loss function. This design preserves close alignment with the downstream task objective while sharply reducing computational overhead. At the same time, DGL has certain limitations. DGL is specifically designed for a particular type of downstream task in which ``pick-one'' groups naturally arise. While many common decision problems can be cast in this form, generalization is still limited. Additionally, even among problems within this class, the empirical benefits of DGL hinge on the informativeness and stability of dual variables, which may vary across applications.\looseness=-1 

In summary, this work contributes a new perspective on decision-focused learning: by turning to the dual side of optimization, we can substantially reduce computational cost while preserving decision alignment. We hope this opens the door to more scalable and practical decision-aware learning pipelines.\looseness=-1 

\bibliographystyle{plainnat}
\bibliography{references.bib}

\clearpage
\appendix
\thispagestyle{empty}
\setcounter{figure}{0}
\renewcommand{\thefigure}{A\arabic{figure}}

\onecolumn
\aistatstitle{A Dual Perspective on Decision-Focused Learning:\\ Scalable Training via Dual-Guided Surrogates\\
(Supplementary Materials)}
\section{``Pick-one" group problem formulations}
\label{app:formulations}
\subsection{Assignment Problem}

Consider the problem of assigning $m$ workers to $k$ jobs, where each job has capacity $c_k$. The match quality $w_{lj}$ between a worker and job must be estimated.

The assignment problem is typically written as follows:
$$
\max_{\bz\in\{0,1\}^{m\times k}}
      \sum_{l,j}w_{lj}z_{lj}
\quad
\text{s.t.}
\;
\sum_{j}z_{lj}=1,\;
\sum_{l}z_{lj}\le c_j.
$$

To map this formulation to the Problem \ref{eq:original_LP}, consider vectorizing the decision variable $\mathbf{\bz}$:
$$
\bx \triangleq (z_{11},\dots,z_{1k},\,z_{21},\dots,z_{mk})^\top\in\{0,1\}^{n},
\quad n=mk,
$$
and similarly let $\mathbf{\by}$ be the vectorized version of $\mathbf{w}$.

Each \emph{worker row} is a group, since each worker can only be assigned to one job:
$$g_l=\bigl\{(l-1)k+1,...,lk\bigr\},\qquad l=1,\dots,m.$$

The global constraint $\bA\bx\le \bb$ must be constructed to reflect the capacity constraints 
$\sum_{l}z_{lj}\le c_j$. Therefore, $\bA\in\mathbb{R}^{k\times mk}$ has $A_{j,(l-1)k+j}=1$ for $j=1,...,k$ and $l=1,...,m$ and is otherwise equal to zero. Additionally, $\bb=(c_1,...,c_k)^\top$.

Now, notice that $(\bA^{\top}\bl)_{j\in g_l}=(\lambda_1,...,\lambda_k)$ and therefore $h_i(\by,\bl)=y_i-\lambda_i$.

\subsection{Weighted Knapsack Problem}

The classic weighted knapsack problem consists of items $i\in \{1,\dots,m\}$ with value $v_i$ and weight $w_i$. 
To formulate this as above, we must be explicit about the item groups. Each item has two options: add to the knapsack or do not. Thus, each item will be in a group of size two.

The knapsack problem can be written as:
$$
\max_{\bx\in\{0,1\}^{m\times 2}}\;\sum_{i}v_{i} x_{i1}
\quad
\text{s.t.}\;
\sum_{i}w_i x_{i1} \le B \text{ for } j=1,2, \quad \sum_{j=1,2} x_{ij}=1 \text{ for all } i,
$$
where $x_{i1}=1$ if item $i$ is added to the knapsack and $x_{i2}=1$ if it is not.
Additionally, $B$ is the knapsack's capacity. As with the Assignment Problem, to map this problem to the general approach described above we will flatten the decision variables into a vector $\bz$, with $\by=(v_1,0,v_2,0,...,v_m,0)$ and $\bA=(w_1,0,w_2,0,...,w_m,0)$. We also have that $g_l=\{2(l-1)+1, 2l\}$.

We can then show that $(h_i(\mathbf{\by},\bl))_{i\in g_l}=(y_{2(l-1)+1} - w_{2(l-1)+1}\bl_1, 0)$. This is because for any group (i.e., item) $l$, the second option of \emph{not} adding the item to the knapsack has a value of zero and a dual variable of zero.
In other words, we should add item $i$ to the knapsack if $y_i-w_i\bl_1\geq 0$.

\section{Proof of Theorem 1}

We begin by proving a lemma that bounds the gap between softmax and argmax.
\begin{lemma}
\label{lem:soft-hard-gap}
Fix any vector $\mathbf{u}\in\mathbb{R}^g$. Let $j^\star=\arg\max_{i} u_i$ and define the \emph{margin}
\[
\Delta(\mathbf{u})
:= u_{j^\star}-\max_{i\neq j^\star} u_i \;>\;0.
\]
Let $p_{\tau,i}(\mathbf{u})$ be the softmax probability assigned to 
$u_i$: 
\[
p_{\tau,i}(\mathbf{u})
=\frac{e^{u_i/\tau}}{\sum_{j} e^{u_j/\tau}}.
\]

% $j_g^\star$:
% \[
% p_{g,\tau}(c)
% =\frac{e^{h_{j_g^\star}(c)/\tau}}{\sum_{i\in g} e^{h_i(c)/\tau}}
% =\frac{1}{1+\sum_{i\neq j_g^\star} e^{-(h_{j_g^\star}(c)-h_i(c))/\tau}}
% \;\ge\;
% 1-\sum_{i\neq j_g^\star} e^{-(h_{j_g^\star}(c)-h_i(c))/\tau}.
% \]
% Hence
% \begin{equation}
% 1-p_{g,\tau}(c)
% \;\le\; \sum_{i\neq j_g^\star} e^{-(h_{j_g^\star}(c)-h_i(c))/\tau}
% \;\le\; (|g|-1)\,e^{-\Delta_g(c)/\tau}.
% \label{eq:softmax-top-gap}
% \end{equation}

Then, the $\ell_1$ distance between the one--hot vector $e_{j^\star}$ and
the softmax vector on $g$ satisfies
\begin{equation}
\bigl\|\,\mathbf{e}_{j^\star}-\mathbf{p}_{\tau}\,\bigr\|_1
% = (1-p_{g,\tau}(c))+\sum_{i\neq j_g^\star}p_{g,\tau,i}(c)
% = 2\,(1-p_{g,\tau}(c))
\;\le\; 2(|g|-1)\,e^{-\Delta(\mathbf{u})/\tau}.
\label{eq:l1-gap}
\end{equation}
\end{lemma}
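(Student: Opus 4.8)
The plan is to reduce the $\ell_1$ distance to a single scalar quantity and then bound it by a crude but effective estimate of the softmax denominator. First I would write out the distance componentwise. Since $\mathbf{e}_{j^\star}$ has a $1$ in coordinate $j^\star$ and zeros elsewhere,
\[
\bigl\|\mathbf{e}_{j^\star}-\mathbf{p}_{\tau}\bigr\|_1
=\bigl(1-p_{\tau,j^\star}\bigr)+\sum_{i\neq j^\star}p_{\tau,i}.
\]
Because the softmax weights are a probability vector, $\sum_{i\neq j^\star}p_{\tau,i}=1-p_{\tau,j^\star}$, so the two contributions coincide and the distance equals $2\bigl(1-p_{\tau,j^\star}\bigr)=2\sum_{i\neq j^\star}p_{\tau,i}$. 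This collapses the whole problem to controlling the total softmax mass placed on the non-maximal coordinates.

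Next I would bound each off-maximal weight. For any $i\neq j^\star$, underestimating the denominator by keeping only its largest term gives
\[
p_{\tau,i}=\frac{e^{u_i/\tau}}{\sum_{j}e^{u_j/\tau}}
\;\le\;\frac{e^{u_i/\tau}}{e^{u_{j^\star}/\tau}}
\;=\;e^{(u_i-u_{j^\star})/\tau}.
\]
The margin assumption is exactly what makes this uniform: for every $i\neq j^\star$ we have $u_i\le\max_{k\neq j^\star}u_k=u_{j^\star}-\Delta(\mathbf{u})$, hence $u_i-u_{j^\star}\le-\Delta(\mathbf{u})$ and $p_{\tau,i}\le e^{-\Delta(\mathbf{u})/\tau}$. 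Summing over the $|g|-1$ non-maximal coordinates and multiplying by the factor of $2$ from the first step yields
\[
\bigl\|\mathbf{e}_{j^\star}-\mathbf{p}_{\tau}\bigr\|_1
\;\le\;2(|g|-1)\,e^{-\Delta(\mathbf{u})/\tau},
\]
which is the claimed bound in \eqref{eq:l1-gap}.

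I do not anticipate a genuine obstacle here: the argument is a short two-step estimate. The only point requiring care is that $\Delta(\mathbf{u})$ must upper-bound the gap $u_{j^\star}-u_i$ for \emph{every} competitor $i$ simultaneously, which is immediate from the definition of the margin as the gap to the second-largest entry. The factor $2$ and the multiplicity $|g|-1$ both arise transparently, so no sharper tools (e.g., Lipschitz properties of softmax) are needed.
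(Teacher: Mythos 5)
Your proof is correct and follows essentially the same route as the paper's: both reduce the $\ell_1$ distance to $2(1-p_{\tau,j^\star})$, bound the total off-maximal softmax mass by $\sum_{i\neq j^\star}e^{-(u_{j^\star}-u_i)/\tau}$, and then apply the margin uniformly to each of the $|g|-1$ terms. The only cosmetic difference is that the paper passes through the identity $p_{\tau,j^\star}=1/(1+S)$ and the inequality $1/(1+r)\ge 1-r$, whereas you bound each $p_{\tau,i}$ directly by dropping all but the largest denominator term; the resulting estimate is identical.
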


\begin{proof}

Notice that 
$p_{\tau, j^\star}$ can be written as:
\[
p_{\tau,j^*}(\mathbf{u})
=\frac{e^{u_{j^*}/\tau}}{\sum_{i} e^{u_i/\tau}}
=\frac{1}{1+\sum_{i\neq j^\star} e^{-(u_{j^\star}-u_i)/\tau}}.
\]

Set
\[
S(\mathbf{u},\tau)
:=\sum_{i\neq j^*}
              \exp\Big(\frac{u_i-u_{j^\star}}{\tau}\Big)
=\sum_{i\ne j^\star}\exp\!\Big(-\frac{u_{j^\star}-u_i}{\tau}\Big)
\;\ge 0.
\]

Then $p_{\tau,j^*}(\mathbf{u})=\frac{1}{1+S(u,\tau)}$. Using the
inequality $\frac{1}{1+r}\ge 1-r$ for all $r\ge 0$,
\[
p_{\tau,j^*}(\mathbf{u})\;\ge\;1-S(\mathbf{u},\tau)
\quad\Longrightarrow\quad
1-p_{\tau,j^*}(\mathbf{u})\;\le\;S(\mathbf{u},\tau).
\]
Moreover, since $u_{j^\star}-u_i\ge \Delta(\mathbf{u})$ for all
$i\neq j^\star$, each summand obeys
\[
\exp\!\Big(-\frac{u_{j^\star}-u_i}{\tau}\Big)
\;\le\;
\exp\!\Big(-\frac{\Delta(\mathbf{u})}{\tau}\Big),
\]
and there are exactly $|g|-1$ such terms. Hence
\[
S(\mathbf{u},\tau)\;\le\;(|g|-1)\,e^{-\Delta(\mathbf{u})/\tau}.
\]
Thus,
\[
1-p_{\tau,j^*}(\mathbf{u})
\;\le\; \sum_{i\neq j^\star} e^{-(u_{j^\star}-u_i)/\tau}
\;\le\; (|g|-1)\,e^{-\Delta(\mathbf{u})/\tau}.
\]

Using the fact that $\sum_{i}p_{\tau,i}(\mathbf{u})=1$,
\[
\bigl\|\mathbf{e}_{j^\star}-\tilde \bz_{\tau}(\mathbf{u})\bigr\|_1
= |1-p_{\tau,j^*}(\mathbf{u})|+\sum_{i\ne j^\star}|p_{\tau,i}(\mathbf{u})|
= (1-p_{\tau,j^*}(\mathbf{u})) + \sum_{i\ne j^\star} p_{\tau,i}(\mathbf{u})
= 2\,(1-p_{\tau,j^*}(\mathbf{u})).
\]
Applying the bound above gives \eqref{eq:l1-gap}:
\[
\bigl\|\mathbf{e}_{j^\star}-\mathbf{p}_{\tau}(\mathbf{u})\bigr\|_1
= 2\,(1-p_{\tau,j^*}(\mathbf{u}))
\;\le\; 2(|g|-1)\,e^{-\Delta(\mathbf{u})/\tau}.
\]
This completes the proof. \hfill $\square$
\end{proof}

\begin{lemma}\label{lemma2}
Let $c \in \{\by, \hat \by\}$. Under Assumptions~A1 and~A2, the LP relaxation of Problem~1 admits an optimal solution $\bx^{\mathrm{LP}}(c)$ satisfying
\[
\bx^{\mathrm{LP}}(c) \;=\; \bz^*(c),
\]
where $\bz^*(c)$ is the groupwise argmax vector defined by
\[
\bz^*_{g}(c) \;=\; e_{\arg\max_{i \in g} h_i(c)}, \qquad 
\bh(c) \;=\; c - \bA^\top \bl^*(c),
\]
for any dual--optimal $\bl^*(c)$. Moreover, by Assumption~A1,
\[
\bx^{\mathrm{IP}}(c) \;=\; \bx^{\mathrm{LP}}(c) \;=\; \bz^*(c).
\]
\end{lemma}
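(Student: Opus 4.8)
The plan is to dualize only the global constraint $\bA\bx\le\bb$ while keeping the one-of-many constraints explicit, and then combine LP strong duality with the uniqueness supplied by Assumption~A2. Write $c\in\{\by,\hat\by\}$ and let $\mathcal{S}=\{\bx\ge 0:\sum_{i\in g}x_i=1\ \forall g\in\mathcal{G}\}$ denote the product of group simplices. The partial Lagrangian is $L(\bx,\bl)=(c-\bA^\top\bl)^\top\bx+\bl^\top\bb$, the associated dual function is $q(\bl)=\max_{\bx\in\mathcal{S}}L(\bx,\bl)$, and the dual problem is $\min_{\bl\ge 0}q(\bl)$. The LP relaxation $\max_{\bx\in\mathcal{S}}c^\top\bx$ subject to $\bA\bx\le\bb$ is feasible (the decision problem is well-posed) and bounded (its feasible region lies in $[0,1]^N$ and $c$ is finite), so LP strong duality holds and both the primal and the partial dual attain their optima with common value.

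Next I would show that the \emph{given} dual-optimal vector $\bl^*(c)$ pairs with any primal optimum $\bx^*$ to certify that $\bx^*$ solves the inner maximization at $\bl^*(c)$. Writing $L(\bx^*,\bl^*)=c^\top\bx^*+(\bl^*)^\top(\bb-\bA\bx^*)$ and using primal feasibility together with $\bl^*\ge 0$ gives $(\bl^*)^\top(\bb-\bA\bx^*)\ge 0$, so $L(\bx^*,\bl^*)\ge c^\top\bx^*$; on the other hand $L(\bx^*,\bl^*)\le q(\bl^*)=c^\top\bx^*$ by strong duality. The forced equality $L(\bx^*,\bl^*)=q(\bl^*)$ yields complementary slackness and, crucially, shows $\bx^*\in\arg\max_{\bx\in\mathcal{S}}(c-\bA^\top\bl^*)^\top\bx$.

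Finally I would analyze that inner maximum. Since $(c-\bA^\top\bl^*)^\top\bx=\sum_{g\in\mathcal{G}}\sum_{i\in g}h_i(c)\,x_i$ separates across groups and each group constraint is an independent simplex, the maximizer decomposes groupwise, and on each simplex the linear objective is maximized only at the vertex attaining $\max_{i\in g}h_i(c)$. Assumption~A2 (margin $\gamma>0$) makes this group maximizer unique, so the inner maximizer is exactly $\bz^*(c)$; combined with the previous step, $\bx^*=\bz^*(c)$, i.e. $\bx^{\mathrm{LP}}(c)=\bz^*(c)$. Assumption~A1 then identifies the LP optimum with the integer optimum, giving $\bx^{\mathrm{IP}}(c)=\bx^{\mathrm{LP}}(c)=\bz^*(c)$. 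Note that feasibility of $\bz^*(c)$ for the original global constraint is automatic, since it coincides with the primal-feasible optimum $\bx^*$.

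The main obstacle is the coupling in the second step: one must verify that the \emph{particular} dual-optimal $\bl^*(c)$ used to define $\bh(c)$ — not merely some dual solution — is the one whose inner maximizer coincides with the primal optimum. The sandwich argument shows this holds for \emph{every} dual-optimal vector under strong duality, after which A2's strict margin upgrades the statement ``$\bx^*$ attains the inner max'' to ``$\bx^*$ equals the groupwise argmax,'' eliminating any ambiguity from non-unique dual or primal solutions.
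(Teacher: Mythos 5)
Your proof is correct and rests on the same engine as the paper's: LP strong duality plus complementary slackness to show the primal optimum is supported on the groupwise argmax of the reduced costs $h_i(c)=c_i-(\bA^\top\bl^*)_i$, with A2's strict margin forcing the one-hot form and A1 closing the gap to the integer optimum. The only difference is packaging — you dualize just the global constraints and argue via the saddle point of the partial Lagrangian over the product of simplices, whereas the paper writes the full LP dual with explicit group multipliers $\mu_g$ (which equal your inner maxima $\max_{i\in g}h_i(c)$) and reads the same conclusion off dual feasibility and complementary slackness.
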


\begin{proof}
Fix $c \in \{\by, \hat \by\}$ and consider the LP relaxation of Problem~1:
\[
\begin{aligned}
\max_{\bx \in \R^n}\quad & c^\top \bx \\
\text{s.t.}\quad & \bA \bx \le \bb, \\
& \sum_{i \in g} x_i = 1 \quad \forall g \in \mathcal{G}, \\
& \bx \ge 0.
\end{aligned}
\]
The corresponding dual has variables $\bl \ge 0$ for the global constraints
and $\{\mu_g \in \R : g \in \mathcal{G}\}$ for the group constraints:
\[
\begin{aligned}
\min_{\bl,\mu}\quad & \bb^\top \bl + \sum_{g \in \mathcal{G}} \mu_g \\
\text{s.t.}\quad & \bA^\top \bl + \tilde\mu \;\ge\; c.
\end{aligned}
\]
where $\tilde\mu \in \R^n$ is defined by $\tilde\mu_i = \mu_{g(i)}$ for the group $g(i)$
containing $i$.

By strong duality there exists an optimal primal--dual pair
$(\bx^{\mathrm{LP}}(c),\bl^*(c),\mu^*(c))$ satisfying the KKT conditions.
Define the reduced costs
\[
\bh(c) := c - \bA^\top \bl^*(c).
\]

Dual feasibility gives
\[
h_i(c) \;\le\; \mu^*_{g(i)} \quad \forall i.
\]
Complementary slackness implies that if $\bx^{\mathrm{LP}}_i(c) > 0$ then
\[
h_i(c) \;=\; \mu^*_{g(i)}.
\]
Thus, within each group $g$, the support of $\bx^{\mathrm{LP}}(c)$ is contained in
$\arg\max_{i \in g} h_i(c)$.

By (A2), for each $g$ the maximizer of $h_i(c)$ is unique almost surely,
say $j_g^*$. Since $\sum_{i \in g} x^{\mathrm{LP}}_i(c)=1$ and $x^{\mathrm{LP}}_i(c)\ge 0$,
the only feasible choice is
\[
x^{\mathrm{LP}}_i(c) =
\begin{cases}
1 & i = j_g^*, \\
0 & i \in g \setminus \{j_g^*\}.
\end{cases}
\]
Therefore $\bx^{\mathrm{LP}}(c) = \bz^*(c)$.

(A1) states that the LP relaxation of Problem~1 is integral,
so any LP--optimal solution coincides with an integer feasible solution
of the IP. Hence
\[
\bx^{\mathrm{IP}}(c) = \bx^{\mathrm{LP}}(c).
\]

Combining the steps proves that
\[
\bx^{\mathrm{IP}}(c) = \bx^{\mathrm{LP}}(c) = \bz^*(c).
\] \hfill $\square$
\end{proof}

\textbf{Proof of Theorem \ref{thm:regret}.}
Let $\hat \bx:=\bx^{\ip}(\hat \by)$ and $\bx^{\star}:=\bx^{\ip}(\by)$.
Add and subtract the softmax decision values at $\hat \by$ and $\by$:
\begin{align*}
\mathrm{Regret}(\theta)
&= \mathbb E\bigl[ \by^{\top}\bx^{\star}- \by^{\top}\hat \bx  \bigr] \\
&= \mathbb E\bigl[ \by^{\top} \bx^{\star} - \by^{\top}\tilde \bz_{\tau}(\by) \bigr]
   - \mathbb E\bigl[ -\by^{\top}\tilde \bz_{\tau}(\by) \bigr] \\
&\quad+ \mathbb E\bigl[ \by^{\top}\tilde \bz_{\tau}(\hat \by) - \by^{\top}\hat \bx \bigr]
   + \mathbb E\!\bigl[ -\by^{\top}\tilde \bz_{\tau}(\hat \by) \bigr].
\end{align*}

Recognizing $\mathbb E[-\by^{\top}\tilde \bz_{\tau}(\hat \by)]=|\mathcal G|\,\mathcal L_{\tau}(\theta)$ and
$\mathcal L_{\tau}^{*}\le -\frac{1}{|\mathcal G|}\mathbb E[\by^{\top}\tilde \bz_{\tau}(\by)]$
(by plugging in $\by$ for $\hat \by$), we get
$$
\mathrm{Regret}(\theta)
\leq 
|\mathcal G|\bigl[\mathcal L_{\tau}(\theta)-\mathcal L_{\tau}^{*}\bigr]
+
\mathbb E\bigl[\by^{\top}\bx^{\star} - \by^{\top}\tilde \bz_{\tau}(\by)\bigr]
+
\mathbb E\bigl[\by^{\top}\tilde \bz_{\tau}(\hat{\by}) - \by^{\top}\hat{\bx}\bigr].
$$

By (A1) and (A2), for $c\in\{\by, \hat \by\}$ the optimal value of the LP satisfies $\bx^{\text{LP}}(c)=\bz^*(c)$ by the KKT conditions and uniqueness, and (A1) implies that $\bx^{\text{LP}}(c)=x^{\ip}(c)$. This is formally shown in Lemma \ref{lemma2}. Hence, we can apply Lemma \ref{lem:soft-hard-gap} group-wise to $\bx^{\ip}(c)$ and $\tilde{\bz}_{\tau}(c)$ to obtain 

$$
\mathbb E\bigl[\bigl|\by^{\top}\hat \bx-\by^{\top}\tilde \bz_{\tau}(\hat \by)\bigr|\bigr]
\leq  \delta_{\hat \by}(\tau),
\qquad
\mathbb E\!\bigl[\bigl|\by^{\top}\bx^{\star}-\by^{\top}\tilde \bz_{\tau}(\by)\bigr|\bigr]
\leq \delta_{\by}(\tau),
$$
where
$$
\delta_{\hat \by}(\tau)
=2C\,\mathbb E\!\Big[\sum_{g\in\mathcal G} (|g|-1)\,e^{-\Delta_g(\bh(\hat \by))/\tau}\Big],
\qquad
\delta_{\by}(\tau)
=2C\,\mathbb E\!\Big[\sum_{g\in\mathcal G} (|g|-1)\,e^{-\Delta_g(\bh(\by))/\tau}\Big].
$$
Assumption~A2 ensures $\Delta_g(\cdot)>0$ almost surely, so
$\delta_{\hat \by}(\tau)\to0$ and $\delta_{\by}(\tau)\to0$ as $\tau\downarrow0$.

Therefore, we have that 
$$
\mathrm{Regret}(\theta)
\leq 
|\mathcal G|\bigl[\mathcal L_{\tau}(\theta)-\mathcal L_{\tau}^{*}\bigr]
+
2C\,\mathbb E\!\Big[\sum_{g\in\mathcal G} (|g|-1)\,e^{-\Delta_g(\hat \by)/\tau}\Big] + 2C\,\mathbb E\!\Big[\sum_{g\in\mathcal G} (|g|-1)\,e^{-\Delta_g(\by)/\tau}\Big]
$$

Applying A2, we have that
$$\mathrm{Regret}(\theta)
\le |\mathcal G|[\mathcal L_\tau(\theta)-\mathcal L_\tau^{*}] + O(e^{-\gamma/\tau}).$$
\hfill $\square$

\section{Regret Bound for Modified Loss Function $\mathcal{J}_{\tau}^{(\alpha)}(\theta)$}
In this section we bound the decision regret in terms of the loss function $\mathcal{J}_{\tau}^{(\alpha)}(\theta)$. 
Throughout, we work at a small but positive temperature
$0<\tau\le\tau_0$ and use the notation $O(\tau)$ to denote a constant
(which may depend on $C$ and $|\mathcal G|$) times~$\tau$.

Recall the definition of $\tilde{\ell}_{\tau}(\theta)$, given by
\begin{equation*}
    \tilde{\ell}_{\tau}(\hat \by(\theta))\triangleq -\frac{1}{|\mathcal{G}|} \sum_{g\in\mathcal{G}}(\mathbf{\by}-\bA^{\top}\hat \bl)^{\top} \tilde \bz_{g,\tau}(\hat{\mathbf{\by}}, \hat \bl),
\end{equation*}

and let $\tilde{\mathcal{L}}_{\tau}(\theta)=\mathbb{E}[\tilde{\ell}_{\tau}(\hat \by(\theta))],$ and $\tilde{\mathcal L}_{\tau}^{\star}\;\triangleq\;\inf_{\theta}\,\tilde{\mathcal L}_{\tau}(\theta)$.

Our expected modified loss function is thus given by 
$$\mathcal J_{\tau}^{(\alpha)}(\theta)\triangleq \tilde{\mathcal L}_{\tau}(\theta)\;+\;\alpha\,\mathbb{E}\!\left[\frac{1}{n}\sum_{i=1}^n (y_i-\hat{y}_i)^2\right],
$$
and additionally define \(\mathcal J_{\tau}^{(\alpha)\,*}\;\triangleq\;\inf_{\theta}\,\mathcal J_{\tau}^{(\alpha)}(\theta)\).

We are interested in showing that decision regret vanishes when $\mathcal J_{\tau}^{(\alpha)}(\theta)$ approaches $J_{\tau}^{(\alpha)^*}$.

\begin{theorem}
\label{thm:regret-alt}
Assume \textbf{A1--A3}.  Then for every $\tau>0$, $\alpha>0$, and
$\theta$,
\[
\mathrm{Regret}(\theta)
\;\le\;
\sqrt{2|\mathcal G|}\,
\sqrt{\,\mathbb E\!\bigl[\|\by-\hat\by\|_{2}^{2}\bigr]}
\;+\;
|\mathcal G|\,\bigl[\tilde{\mathcal L}_{\tau}(\theta)-\tilde{\mathcal L}^{\star}_{\tau}\bigr]
\;+\;
O(\tau).
\]
Consequently, if a parameter sequence $(\theta_k,\tau_k)$ satisfies  
\[
\mathcal J_{\tau_k}^{(\alpha)}(\theta_k)
\;-\;\mathcal J_{\tau_k}^{(\alpha)\,*}\;\longrightarrow 0,
\qquad
\tau_k\downarrow 0,
\]
then $\mathrm{Regret}(\theta_k)\longrightarrow 0$.
\end{theorem}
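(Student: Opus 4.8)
The plan is to bound the regret by two complementary handles and then read off the three terms. The genuinely new feature relative to Theorem~\ref{thm:regret} is that $\tilde{\mathcal L}_\tau$ lives on the \emph{reduced-cost} scale $\by-\bA^\top\hat\bl$, and $\hat\bl=\bl^\star(\hat\by)$ itself depends on $\theta$; this blocks a pure excess-risk argument, so a prediction-accuracy term must appear. I would first extract that term directly. Writing $\hat\bx=\bx^{\ip}(\hat\by)$ and $\bx^\star=\bx^{\ip}(\by)$, optimality of $\hat\bx$ for $\hat\by$ gives $\hat\by^\top(\bx^\star-\hat\bx)\le 0$, hence $\mathrm{Regret}(\theta)\le\mathbb{E}[(\by-\hat\by)^\top(\bx^\star-\hat\bx)]$. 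By A1 and the one-of-many structure, $\bx^\star$ and $\hat\bx$ are one-hot within each group, so $\|\bx^\star-\hat\bx\|_2\le\sqrt{2|\mathcal G|}$; Cauchy--Schwarz followed by Jensen then yields the first term $\sqrt{2|\mathcal G|}\sqrt{\mathbb{E}\|\by-\hat\by\|_2^2}$.

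To obtain the loss-dependent terms and tie the bound to the trainable objective, I would replay the add-and-subtract decomposition of Theorem~\ref{thm:regret}, now inserting the soft surrogates $\tilde{\bz}_\tau(\hat\by,\hat\bl)$ and $\tilde{\bz}_\tau(\by,\bl^\star)$ and recognizing $\mathbb{E}[-(\by-\bA^\top\hat\bl)^\top\tilde{\bz}_\tau(\hat\by,\hat\bl)]=|\mathcal G|\,\tilde{\mathcal L}_\tau(\theta)$. Lemma~\ref{lemma2} identifies the $\tau\!\to\!0$ limits of the surrogates with $\hat\bx$ and $\bx^\star$, and upper-bounding $\tilde{\mathcal L}_\tau^\star$ by the value at the true-parameter predictor supplies the excess-risk term $|\mathcal G|[\tilde{\mathcal L}_\tau(\theta)-\tilde{\mathcal L}_\tau^\star]$. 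Applying Lemma~\ref{lem:soft-hard-gap} group-wise to the reduced-cost vectors $\bh(\by)$ and $\bh(\hat\by)$---whose margins are at least $\gamma$ by A2 and whose entries are bounded through A3---contributes the softmax--argmax gaps, which are $O(e^{-\gamma/\tau})=O(\tau)$. Since the first handle already gives $\mathrm{Regret}(\theta)\le\sqrt{2|\mathcal G|}\sqrt{\mathbb{E}\|\by-\hat\by\|_2^2}$ and both the excess-risk and gap terms are nonnegative, the stated three-term inequality follows, with the excess-risk term retained as the structurally meaningful quantity coupling to the training loss.

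For the consequence I would exploit that $\mathcal J_\tau^{(\alpha)}=\tilde{\mathcal L}_\tau+\alpha\,\mathbb{E}[\tfrac1n\|\by-\hat\by\|_2^2]$ is a sum of two pieces, each nonnegative relative to its own infimum. Bounding $\mathcal J_\tau^{(\alpha)\,*}$ above by the value at the true-parameter predictor $\theta_0$ (where the MSE vanishes) gives $\alpha\,\mathbb{E}[\tfrac1n\|\by-\hat\by(\theta_k)\|_2^2]\le[\mathcal J_{\tau_k}^{(\alpha)}(\theta_k)-\mathcal J_{\tau_k}^{(\alpha)\,*}]+[\tilde{\mathcal L}_{\tau_k}(\theta_0)-\tilde{\mathcal L}_{\tau_k}^\star]$, and similarly for the excess dual-adjusted risk. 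The first bracket vanishes by hypothesis. Provided the second bracket vanishes as $\tau_k\downarrow0$, both the prediction-error term and the excess-risk term in the main inequality go to zero, while the $O(\tau_k)$ remainder vanishes directly, giving $\mathrm{Regret}(\theta_k)\to0$.

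The main obstacle is precisely this second bracket, $\tilde{\mathcal L}_{\tau_k}(\theta_0)-\tilde{\mathcal L}_{\tau_k}^\star\to0$: one must show that accurate predictions (asymptotically) minimize the dual-adjusted loss. Because $\hat\bl=\bl^\star(\hat\by)$ shifts the reduced-cost scale as $\theta$ varies, it is not automatic that $\hat\by=\by$ minimizes $\tilde{\mathcal L}_\tau$---this is the same reward-shaping/capacity misalignment flagged in Section~\ref{sec:method}. I expect to control it by combining Lemma~\ref{lemma2} (so that at small $\tau$ both $\theta_0$ and the minimizer induce group-wise argmax decisions) with complementary slackness to cancel the $\bA^\top\bl$ contributions, reducing the gap to a dual-difference term $\mathbb{E}[(\bl^\star-\hat\bl)^\top\bb]$ that I would bound using A2/A3 together with the same prediction-accuracy estimate; making this cancellation rigorous uniformly in $\theta$ is the delicate step.
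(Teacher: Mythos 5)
For the displayed inequality your proof is correct but takes a genuinely different, and simpler, route than the paper's. Writing $\hat\bx=\bx^{\ip}(\hat\by)$ and $\bx^\star=\bx^{\ip}(\by)$, the paper starts from the same decomposition
\[
\mathrm{Regret}(\theta)=\mathbb E\bigl[(\by-\hat\by)^\top(\bx^\star-\hat\bx)\bigr]+\mathbb E\bigl[\hat\by^\top(\bx^\star-\hat\bx)\bigr],
\]
and bounds the first term exactly as you do (Cauchy--Schwarz with $\|\bx^\star-\hat\bx\|_2\le\sqrt{2|\mathcal G|}$), but then spends most of the proof on the second term: it substitutes $\hat\by=\hat\bh+\bA^\top\hat\bl$, cancels the $\bA^\top\hat\bl$ contribution via complementary slackness and primal feasibility, and splits $\hat\bh^\top(\bx^\star-\hat\bx)$ into an excess-risk piece $|\mathcal G|[\tilde{\mathcal L}_\tau(\theta)-\tilde{\mathcal L}^\star_\tau]$ plus softmax--argmax gaps of order $O(\tau)$ via Lemmas \ref{lem:soft-hard-gap} and \ref{lemma2}. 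You instead observe that this second term is $\le 0$ outright, because $\hat\bx$ maximizes $\hat\by^\top\bx$ over the feasible set; since the excess risk is nonnegative by definition of the infimum and the $O(\tau)$ slack may be taken to be zero, the stated three-term bound follows. That argument is airtight, and it exposes something the paper's write-up obscures: the theorem's inequality is already implied by the pure prediction-error bound $\mathrm{Regret}(\theta)\le\sqrt{2|\mathcal G|}\,\sqrt{\mathbb E\|\by-\hat\by\|_2^2}$, so the extra terms add no logical content. What the paper's longer derivation buys is only the intended interpretation of the excess-risk term as a training signal.

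The genuine gap is in the ``Consequently'' claim, and while you correctly located the difficulty, your proposed repair will not go through. It needs (i) realizability (some $\theta_0$ with zero MSE), which A1--A3 do not grant and which fails whenever $\by$ has irreducible noise---in which case the first term of the bound is bounded away from zero for every predictor, so no argument routed through this inequality can yield $\mathrm{Regret}\to0$; and (ii) $\tilde{\mathcal L}_{\tau_k}(\theta_0)-\tilde{\mathcal L}^\star_{\tau_k}\to0$, i.e., that accurate prediction asymptotically minimizes the dual-adjusted loss. Claim (ii) is false in general, for exactly the reward-shaping reason you flag: the loss scores $\bh^\dagger=\by-\bA^\top\hat\bl$ against the softmax picks of $\hat\bh$, so a predictor that mispredicts in a way that drives its own duals to zero can score the \emph{raw} rewards on its picks. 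Concretely, with two unit-weight knapsack items of true values $(1.5,1)$ and capacity $1$, the truthful predictor has dual-optimal $\lambda^\star\in[1,1.5]$ and limiting loss $-\tfrac12(1.5-\lambda^\star)\ge-\tfrac14$, whereas predicting $(1.5,-1)$ makes $\hat\bl=0$ dual-optimal and gives limiting loss $-\tfrac12(1.5)=-0.75$; the bracket you need to vanish stays bounded below by a constant uniformly in small $\tau$. More basically, closeness of the sum $\mathcal J^{(\alpha)}_\tau=\tilde{\mathcal L}_\tau+\alpha\,\mathrm{MSE}$ to its infimum does not force each summand close to its own infimum. You should know, however, that the paper's own proof never addresses this part at all---it stops after establishing the displayed inequality---so the ``Consequently'' statement is unproven there too; your proposal is more forthcoming than the paper in isolating this step, but neither closes it.
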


\begin{proof}
For brevity let $\bx^{\ip}\triangleq \bx^{\ip}(\by)$ and $\hat \bx^{\ip}\triangleq \bx^{\ip}(\hat \by)$.

\textbf{Step 1. Regret decomposition.} First, decompose regret as:
\begin{align}
\mathbf{\by}^{\top}\bx^{\ip}-\mathbf{\by}^{\top}\hat \bx^{\ip}
&=
(\by-\hat \by)^{\top}\bx^{\ip}
  - (\by-\hat \by)^{\top}\hat \bx^{\ip}
  + \hat \by^{\top}\bx^{\ip}-\hat \by^{\top} \hat \bx^{\ip}\\[2pt]
&=
\underbrace{(\by-\hat \by)^{\top}(\bx^{\ip}-\hat \bx^{\ip})}_{\text{Term 1}}
\;+\;
\underbrace{\hat \by^{\top}(\bx^{\ip}-\hat \bx^{\ip})}_{\text{Term 2}}. \label{eq:decomp}
\end{align}

\textbf{Step 2. Bound Term 1.} Because each group contributes at most two nonzeros in 
$\bx^{\ip}-\hat \bx^{\ip}$, we have
$\|\bx^{\ip}-\hat \bx^{\ip}\|_2 \le \sqrt{2|\mathcal G|}$.  
By Cauchy--Schwarz,
\[
\bigl|(\by-\hat \by)^{\top}(\bx^{\ip}-\hat \bx^{\ip})\bigr|
\;\le\; \sqrt{2|\mathcal G|}\,\|\by-\hat \by\|_2.
\]

\textbf{Step 3. Bound Term 2.} Let $\hat\bl$ be dual-optimal for $\hat \by$ and set
$\hat \bh=\hat \by-\bA^{\top}\hat\bl$.
We can write
\begin{align}
\hat \by^{\top}(\bx^{\ip}-\hat \bx^{\ip})
&=\bigl(\hat \bh+\bA^{\top}\hat\bl\bigr)^{\top}(\bx^{\ip}-\hat \bx^{\ip})\\[2pt]
&=\hat \bh^{\top}(\bx^{\ip}-\hat \bx^{\ip})
  +\hat\bl^{\top}(\bA\bx^{\ip}-\bA\hat \bx^{\ip}) \\
&=\hat \bh^{\top}(\bx^{\ip}-\hat \bx^{\ip})
  +\hat\bl^{\top}(\bA\bx^{\ip}-\bb)  
\end{align}

where the final equality follows because $(\hat \bx^{\ip},\hat\bl)$ is primal–dual optimal (by A1 and strong duality),
complementary slackness gives
$\hat\bl^{\top}\bigl(\bb-\bA\hat \bx^{\ip}\bigr)=0$
and thus
$\hat\bl^{\top}\bA\hat \bx^{\ip}=\hat\bl^{\top}\bb$.  

Since $\bA\bx^{\ip}\le \bb$ and $\hat\bl\ge 0$,
the last term is non-positive and therefore

\begin{align}\label{eq:ub}
& \hat \by^{\top}(\bx^{\ip}-\hat \bx^{\ip})
\;\le\;
\hat \hat \bh^{\top}(\bx^{\ip}-\hat \bx^{\ip})
\end{align}

Recall that $\tilde{\bz}_{\tau}$ is the group-wise softmax transformation of $\hat \bh$.
Add and subtract $\hat \bh^{\top}\tilde \bz_\tau$ from the upper bound in \ref{eq:ub}:
\[
\hat \bh^{\top}(\bx^{\ip}-\hat \bx^{\ip})
= \hat \bh^{\top}(\bx^{\ip}-\tilde \bz_\tau)
 +\hat \bh^{\top}(\tilde \bz_\tau-\hat \bx^{\ip}).
\]

From the first term above, add and subtract $\bh^{\top}\bz^*$ to obtain
$$ \hat \bh^{\top}(\bx^{\ip}-\hat \bx^{\ip})
= \hat \bh^{\top}(\bx^{\ip}-\bz^*) + \hat \bh^{\top}(\bz^*-\tilde \bz_\tau)
 +\hat \bh^{\top}(\tilde \bz_\tau-\hat \bx^{\ip}).$$

Because $\hat \bh^{\top}\bz^*=\argmax\hat \bh$, $\hat \bh^{\top}(\bx^{\ip}-\bz^*)\leq 0$ and therefore

$$ \hat \bh^{\top}(\bx^{\ip}-\hat \bx^{\ip})
\leq \underbrace{\hat \bh^{\top}(\bz^*-\tilde \bz_\tau)}_{\text{Term 2(a)}}
 +\underbrace{\hat \bh^{\top}(\tilde \bz_\tau-\hat \bx^{\ip})}_{\text{Term 2(b)}}.$$

\textbf{Step 3a. Bound Term 2(a).}  Define $\bh^{\dagger}:=\by-\bA^{\top}\hat\bl$. This is the term that is used as a proxy for reward in $\tilde{\ell}_{\tau}(\theta)$. Using $\hat \bh-\bh^{\dagger}=\hat \by-\by$, we have for any vectors $\mathbf{u},\mathbf{v}$,
\[
\hat \bh^{\top}(\mathbf{u}-\mathbf{v})=\bh^{{\dagger}^\top}(\mathbf{u}-\mathbf{v}) + (\hat \by-\by)^{\top}(\mathbf{u}-\mathbf{v}).
\]

Applying the identity above with $\mathbf{u}=\bz^*$ (hard argmax under $\hat \bh$) and $\mathbf{v}=\tilde \bz_\tau$:
\begin{align*}
\hat \bh^{\top}(\bz^*-\tilde \bz_\tau)
&=\bh^{{\dagger}^\top}(\bz^*-\tilde \bz_\tau) + (\hat \by-\by)^{\top}(\bz^*-\tilde \bz_\tau) \\
&\le \bh^{{\dagger}^\top}(\bz^\dagger-\tilde \bz_\tau) + (\hat \by-\by)^{\top}(\bz^*-\tilde \bz_\tau),
\end{align*}
where $\bz^\dagger$ is the hard argmax under $\bh^{\dagger}$,
and the inequality uses that $\bh^{{\dagger}^\top} \bz^*\le \bh^{{\dagger}^\top} \bz^\dagger$ groupwise (which follows from the fact that $\bh^{{\dagger}^\top} \bz^\dagger = \argmax \bh^{{\dagger}^\top}$).

By definition of the expected loss, taking expectations gives
\[
\mathbb{E}\bigl[\bh^{{\dagger}\top}(\bz^\dagger-\tilde \bz_\tau)\bigr]
\;=\;|\mathcal G|\bigl[\tilde{\mathcal L}_\tau(\theta)-\tilde{\mathcal L}^{\star}_{\tau}\bigr].
\]

The correction $(\hat \by-\by)^{\top}(\bz^*-\tilde \bz_\tau)$ is bounded by
Cauchy--Schwarz:
\[
\bigl|(\hat \by-\by)^{\top}(\bz^*-\tilde \bz_\tau)\bigr|
\;\le\;\|\hat \by-\by\|_2 \,\|\bz^*-\tilde \bz_\tau\|_2.
\]
Because each $\tilde \bz_{g,\tau}$ puts $1-O(\tau)$ on its maximiser and 
$O(\tau)$ elsewhere, 
$\|\bz^*-\tilde \bz_\tau\|_2=O(\tau)\sqrt{|\mathcal G|}$.
Thus in expectation,
\[
\mathbb E\bigl|(\hat \by-\by)^{\top}(\bz^*-\tilde \bz_\tau)\bigr|
\;\le\;O(\tau)\sqrt{|\mathcal G|}\,
       \sqrt{\mathbb E\|\by-\hat \by\|_2^2}.
\]

Therefore, the expectation of Term 2(a) is bounded above by 
$$
|\mathcal G|\bigl[\tilde{\mathcal L}_\tau(\theta)-\tilde{\mathcal L}^*(\theta)\bigr]+O(\tau)\sqrt{|\mathcal G|}\,
       \sqrt{\mathbb E\|\by-\hat \by\|_2^2}.$$

\textbf{Step 3b. Bound Term 2(b).} Finally, we must bound the term $\hat \bh^{\top}(\tilde \bz_\tau-\hat \bx^{\ip})$. By Lemma \ref{lemma2}, we have that $\hat \bx^{\ip}=\bz^*$. Therefore, $\tilde \bz_\tau$ becomes close to $\hat \bx^{\ip}$ as $\tau\rightarrow 0$. As before, we have that $|\hat \bh^{\top}(\tilde \bz_\tau-\hat \bx^{\ip})|=O(\tau) |\mathcal{G}|$ since $\hat \bh^{\top}$ is bounded.

\textbf{Step 4. Combining the bounds.} Substituting into the decomposition, then taking expectations,
\begin{align*}
\mathrm{Regret}(\theta)
&\le \sqrt{2|\mathcal G|}\,\sqrt{\mathbb E\|\by-\hat \by\|_2^2} \\
&\quad +|\mathcal G|\bigl[\tilde{\mathcal L}_\tau(\theta)-\tilde{\mathcal L}^{\star}_{\tau}\bigr]
      +O(\tau)\sqrt{|\mathcal G|}\,\sqrt{\mathbb E\|\by-\hat \by\|_2^2}+O(\tau)\,|\mathcal G|.
\end{align*}
Absorb constants into the big-$O$ notation to obtain
\[
\mathrm{Regret}(\theta)
\;\le\;
\sqrt{2|\mathcal G|}\,\sqrt{\mathbb E\|\by-\hat \by\|_2^2}
+|\mathcal G|\bigl[\tilde{\mathcal L}_\tau(\theta)-\tilde{\mathcal L}^{\star}_{\tau}\bigr]
+O(\tau).
\]
\end{proof}

\section{Pseudocode for DGL Dual-Update Variants}
In Section \ref{sec:dual_refresh}, we introduced three DGL variants for handling dual variables during training and presented Algorithm \ref{alg:dgl-fixed} for fixed-frequency updates. Algorithms \ref{alg:dgl-none} and \ref{alg:dgl-auto}, shown here, provide pseudocode for DGL with no dual updates and with rule-based automatic updates, respectively. Relative to Algorithm \ref{alg:dgl-fixed}, the training loop is unchanged; only the dual initialization and lines 3--8 differ. We highlight the variant-specific lines in \textcolor{Green}{green}.

\begin{algorithm}[h!]
\small
\caption{DGL with No Dual Updates}
\label{alg:dgl-none}
\begin{algorithmic}[1]
\REQUIRE Dataset $\{(\mathbf{w}^{(k)}, \mathbf{y}^{(k)})\}_{k=1}^K$, model $M_\theta$, temperature $\tau$, learning rate $\eta$, number of epochs $T$
\STATE Initialize model parameters $\theta$
\STATE \textcolor{Green}{Solve Problem~\ref{eq:original_LP} with true train costs to obtain updated duals
$\bl_k\leftarrow \bl^*(\mathbf{y}^{(k)})$}
\FOR{$t = 0$ to $T-1$}
    \FOR{each mini-batch $\mathcal{B} \subset \{1, \dots, K\}$}
        \STATE Compute predictions 
        $\hat{\mathbf{y}}^{(k)} \leftarrow (M_\theta(\mathbf{w}_{i}^{(k)}))_{i\in \mathcal{B}}$
        \STATE Compute surrogate decisions $\tilde{\mathbf{z}}_k = \text{softmax}(\tau \cdot (\hat{\mathbf{y}}^{(k)} - \bA^\top \bl_k))$ for each $k\in \mathcal{B}$
        \STATE Compute loss $\ell_\tau(\theta, \bl) = \frac{1}{|\mathcal{B}|} \sum_{k \in \mathcal{B}} -\mathbf{y}^{(k)\top} \tilde{\mathbf{z}}_k$
        \STATE Update model: $\theta \leftarrow \theta - \eta \cdot \nabla_\theta \ell_\tau(\theta,\bl)$
    \ENDFOR
\ENDFOR
\end{algorithmic}
\end{algorithm}

\begin{algorithm}[h!]
\small
\caption{DGL with Auto Updates}
\label{alg:dgl-auto}
\begin{algorithmic}[1]
\REQUIRE Dataset $\{(\mathbf{w}^{(k)}, \mathbf{y}^{(k)})\}_{k=1}^K$, model $M_\theta$, temperature $\tau$, learning rate $\eta$, number of epochs $T$, tolerance $\delta$
\STATE Initialize model parameters $\theta$
\textcolor{Green}{ \STATE Initialize duals. Solve Problem~\ref{eq:original_LP} with current predictions
$\bl_k\leftarrow \bl^*(\hat{\mathbf{y}}^{(k)})$}
\FOR{$t = 0$ to $T-1$}
    \FOR{each mini-batch $\mathcal{B} \subset \{1, \dots, K\}$}
        \STATE Compute predictions 
        $\hat{\mathbf{y}}^{(k)} \leftarrow (M_\theta(\mathbf{w}_{i}^{(k)}))_{i\in \mathcal{B}}$
        \STATE Compute surrogate decisions $\tilde{\mathbf{z}}_k = \text{softmax}(\tau \cdot (\hat{\mathbf{y}}^{(k)} - \bA^\top \bl_k))$ for each $k\in \mathcal{B}$
\textcolor{Green}{        \IF{$A^\top \tilde{\mathbf{z}}_k > \bb$ or $A^\top \tilde{\mathbf{z}}_k < \bb-\delta$} 
            \STATE Solve Problem~\ref{eq:original_LP} with current predictions to obtain updated duals
            $\bl_k\leftarrow \bl^*(\hat{\mathbf{y}}^{(k)})$
        \ENDIF}
        \STATE Compute loss $\ell_\tau(\theta, \bl) = \frac{1}{|\mathcal{B}|} \sum_{k \in \mathcal{B}} -\mathbf{y}^{(k)\top} \tilde{\mathbf{z}}_k$
        \STATE Update model: $\theta \leftarrow \theta - \eta \cdot \nabla_\theta \ell_\tau(\theta,\bl)$
    \ENDFOR
\ENDFOR
\end{algorithmic}
\end{algorithm}

\section{Implementation Details}
The code implementing DGL and reproducing the experiments in Section~\ref{sec:experiments} is available at our (anonymized) repository: \href{hidden link}{\color{Blue}{GitHub}}. Our implementation builds on the codebase of \cite{shah_decision-focused_2022} (\href{https://github.com/sanketkshah/LODLs}{\color{Blue}{GitHub}}). Following that architecture, we implemented the task environments \texttt{SyntheticMatching} and \texttt{WeightedKnapsack} as subclasses of \texttt{PThenO}, and added the DGL loss to \texttt{losses.py}.

\paragraph{Hyperparameter selection.}
For each method we perform a grid search over the ranges in Table~\ref{tab:hparam_ranges}. The selected configuration is the one with the lowest \emph{validation} relative regret. Test metrics are reported only for this chosen configuration to avoid validation--test leakage. Table~\ref{tab:selected_hparams} summarizes the selected hyperparameters $(\eta,\alpha,\tau)$ for every experimental setting and method (``—'' denotes not applicable). 

\begin{table}[h!]
\centering
\small
\caption{Hyperparameter search spaces.}
\label{tab:hparam_ranges}
\begin{tabular}{@{} l l l @{}}
\toprule
\textbf{Hyperparameter} & \textbf{Symbol} & \textbf{Range / Values Tried} \\
\midrule
Learning rate & $\eta$ & $\{0.01, 0.05, 0.1, 0.5, 1\}$ \\
Loss weight on MSE & $\alpha$ & $\{0.1, 0.2, \ldots, 1\}$ \\
Temperature & $\tau$ & \{0.1, 0.5, 1 \}\\
% Dual update schedule & — & $\{\texttt{none},\ \texttt{fixed},\ \texttt{auto}\}$ \\
% Update frequency (fixed) & $u$ & $\{1, 2, 5, 10\}$ (every $u$ steps) \\
\bottomrule
\end{tabular}
\end{table}

\begin{table}[h!]
\centering
\small
\caption{Selected hyperparameters per method and experimental setting.}
\label{tab:selected_hparams}
\setlength{\tabcolsep}{5pt}
\begin{tabular}{@{}l *{12}{c}@{}}
\toprule
& \multicolumn{3}{c}{Matching (size=10)}
& \multicolumn{3}{c}{Matching (size=50)}
& \multicolumn{3}{c}{Knapsack (size=24)}
& \multicolumn{3}{c}{Knapsack (size=48)} \\
\cmidrule(lr){2-4} \cmidrule(lr){5-7} \cmidrule(lr){8-10} \cmidrule(lr){11-13}
\textbf{Method}
& $\eta$ & $\alpha$ & $\tau$
& $\eta$ & $\alpha$ & $\tau$
& $\eta$ & $\alpha$ & $\tau$
& $\eta$ & $\alpha$ & $\tau$ \\
\midrule
Two-stage &$10^{-2}$  & —     & —     &$10^{-2}$  & — & —  &0.1  & —      & —   &1.0  & — & — \\
SPO+      &$10^{-2}$  & —   & —       &$10^{-2}$  & —  & —  &0.1  & —     & —   &0.1  & —    & — \\
QPTL      &$10^{-2}$  & 0.0   & —     &$10^{-2}$  &0.2 & —  &0.5  &0.0    & —   &1.0  &0.0    & — \\
DGL-none  &$10^{-2}$  & 0.0   & 0.1   &$10^{-2}$  &0.6 &0.1  &1.0  &0.0    &0.1 &1.0  &0.0    &1.0    \\
DGL-auto  &$10^{-2}$  & 0.0   & 0.1   &$10^{-2}$  &0.6 &1.0  &1.0  &0.0    &0.1 &1.0  &0.6    &0.1    \\
DGL-1     &$10^{-2}$  & 0.0   & 0.1   &$10^{-2}$  &0.6 &1.0  &1.0  &0.0    &0.1 &1.0  &0.6    &0.1    \\
DGL-5     &$10^{-2}$  & 0.0   & 0.1   &$10^{-2}$  &0.2 &0.1  &1.0  &0.0    &1.0 &1.0  &0.2    &1.0    \\
\bottomrule
\end{tabular}

\vspace{0.35em}
\end{table}

\paragraph{Solvers and subproblems.}
All problem-specific optimization subroutines (LP/ILP relaxations, dual computations, and feasibility checks) use the same solver family and default options as in~\cite{shah_decision-focused_2022}. Concretely, the DGL and SPO+ variants call \textsc{Gurobi} via its Python API, while QPTL is implemented in \textsc{CVXPY}. This keeps solver-side effects fixed so that observed differences are attributable to the learning objective rather than solver tuning.

\paragraph{Warm start for DFL methods.}
We warm start \emph{every} decision-focused learning (DFL) variant from a common model trained with mean squared error (MSE). Concretely, we first train the predictor with the MSE loss only; we then initialize each DFL run from these MSE weights and proceed with the corresponding decision-focused objective. This choice serves two purposes. First, it isolates the effect of the \emph{loss} and \emph{dual-update policy}: all methods begin from the same functional basin, so downstream differences are not confounded by arbitrary random initializations. Second, it improves optimization stability for DFL objectives, which are known to be harder to train from scratch due to non-convex couplings between predictions and decisions; starting from a reasonable predictor reduces burn-in and variance.

\paragraph{Why not average over random initializations?}
Averaging over many random initializations primarily estimates sensitivity to the initial weights. Our warm-start protocol \emph{controls} for this factor by fixing a shared initialization that is already competitive under a standard surrogate (MSE). In pilot runs we observed that, once warm-started, between-seed variation in decision metrics is small relative to the method-to-method gaps we report. Reporting a single, controlled warm start therefore (i) focuses the comparison on the learning objective itself, (ii) reduces confounding from optimization noise, and (iii) keeps compute comparable across methods and tasks. For reproducibility, we fix library RNG seeds and release scripts that regenerate the warm-start checkpoint and all downstream results from scratch.

\section{Additional Experimental Results}
\subsection{Many-to-One Matching}
Because our many-to-one Matching setting is newly proposed and not part of prior DFL benchmarks, we also report per-seed robustness to the stochastic data-generation pipeline. Concretely, we regenerate train/val/test splits with three global seeds (10, 25, 95) and retrain every method under the same protocol as in the main experiments, including the MSE warm start and fixed training RNG seeds. We intentionally \emph{do not average curves across these seeds}: each seed induces a different dataset and thus a distinct experimental setting, so averaging would conflate heterogeneous tasks and obscure method-level differences. Instead, we present the per-seed curves and observe consistent qualitative trends across all three seeds: DGL variants reach low regret much earlier than SPO+ and QPTL; light refresh policies (e.g., DGL-1 or DGL-auto) typically improve over DGL-none at modest extra cost; and while QPTL can sometimes match or slightly surpass the best final regret, it does so at substantially higher wall-clock time. See Fig.~\ref{fig:matching_seeds}.

\begin{figure}[h!]
    \centering
    \includegraphics[width=0.8\linewidth]{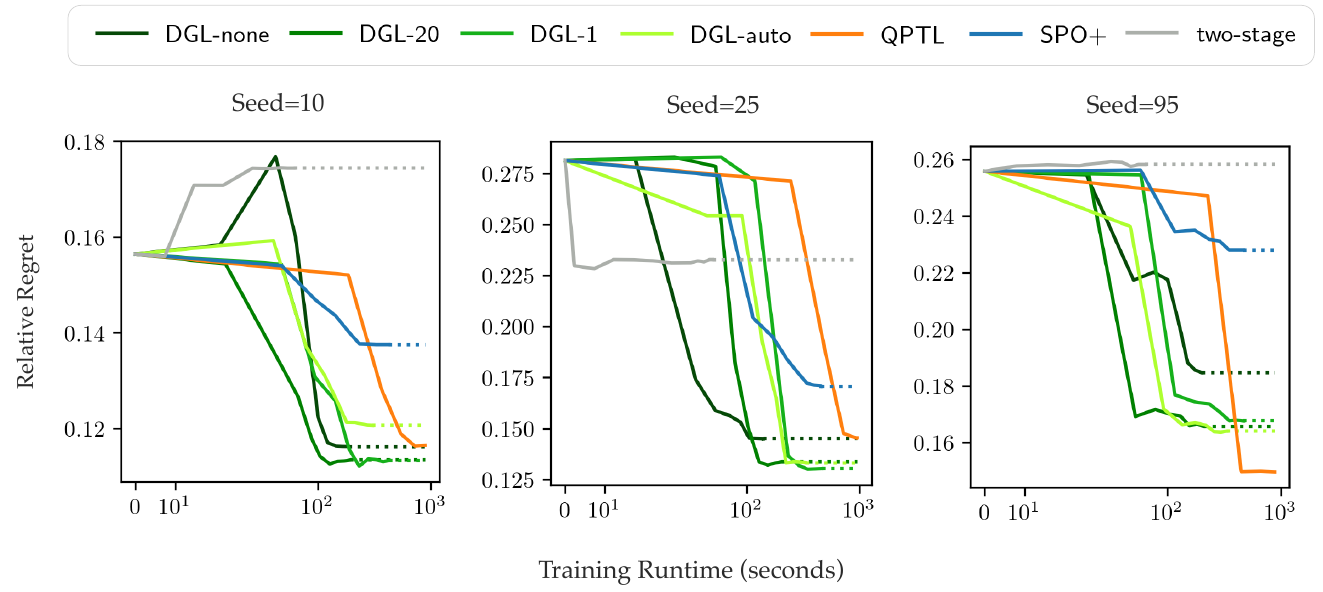}
    \caption{\textbf{Additional per-seed results for the Many-to-One Matching setting.} Each panel corresponds to a distinct dataset generated with a different global seed.}
    \label{fig:matching_seeds}
\end{figure}

% \subsection{Shortest Path on a $5\times5$ grid}
\end{document}